\pgfplotsset{compat=1.18}       
\newtheorem{theorem}{Theorem}
\newtheorem{definition}{Definition}
\newtheorem{remark}{Remark}
\newcommand{\bdmath}{\begin{dmath}}
\newcommand{\edmath}{\end{dmath}}
\newcommand{\beq}{\begin{equation}}
\newcommand{\eeq}{\end{equation}}
\newcommand{\bdm}{\begin{displaymath}}
\newcommand{\edm}{\end{displaymath}}
\newcommand{\bea}{\begin{eqnarray}}
\newcommand{\eea}{\end{eqnarray}}
\newcommand{\beal}{\beq \begin{array}{ll}}
\newcommand{\eeal}{\end{array} \eeq}
\newcommand{\beas}{\begin{eqnarray*}}
\newcommand{\eeas}{\end{eqnarray*}}
\newcommand{\ba}{\begin{array}}
\newcommand{\ea}{\end{array}}
\newcommand{\bit}{\begin{itemize}}
\newcommand{\eit}{\end{itemize}}
\newcommand{\ben}{\begin{enumerate}}
\newcommand{\een}{\end{enumerate}}
\newcommand{\calX}{{\cal X}}
\newcommand{\calZ}{{\cal Z}}
\newcommand{\ie}{\emph{i.e.,}\xspace}
\renewcommand{\boldsymbol}[1]{{\bm #1}}
\newcommand{\hide}[1]{}
\newcommand{\hiddenText}{{\color{gray} hidden text.}}
\newcommand{\hideWithText}[1]{\hiddenText}
\newcommand{\tran}{^{\mathsf{T}}}
\newcommand{\zero}{{\mathbf 0}}
\newcommand{\Real}[1]{ { {\mathbb R}^{#1} } }
\newcommand{\scenario}[1]{{\smaller \sf#1}\xspace}
\newcommand{\blue}[1]{{\color{blue}#1}}
\definecolor{best}{HTML}{1B7837}
\definecolor{second}{HTML}{5AAE61}
\newcommand{\best}[1]{\textbf{\textcolor{best}{#1}}}
\newcommand{\linkToPdf}[1]{\href{#1}{\blue{(pdf)}}}
\newcommand{\linkToPpt}[1]{\href{#1}{\blue{(ppt)}}}
\newcommand{\linkToCode}[1]{\href{#1}{\blue{(code)}}}
\newcommand{\linkToWeb}[1]{\href{#1}{\blue{(web)}}}
\newcommand{\linkToVideo}[1]{\href{#1}{\blue{(video)}}}
\newcommand{\linkToMedia}[1]{\href{#1}{\blue{(media)}}}
\newcommand{\award}[1]{\xspace} 
\newcommand{\bmat}{\left[ \begin{array}}
\newcommand{\emat}{\end{array}\right]}
\newcommand{\crisp}{\scenario{CRISP}}
\begin{document}

\title{
Fast and Safe Trajectory Optimization for Mobile
Manipulators With Neural Configuration Space
Distance Field
}

\author{Yulin Li$^{1,2}$, Zhiyuan Song$^{1}$, Yiming Li$^{3}$, Zhicheng Song$^{1}$, Kai Chen$^{1}$, Chunxin Zheng$^{1}$, \\ Zhihai Bi$^{1}$, Jiahang Cao$^{4}$, Sylvain Calinon$^{3}$, Fan Shi$^{2}$, and Jun Ma$^{1}$ 
\thanks{$^{1}$Robotics and Autonomous Systems Thrust, The Hong Kong University of Science and Technology (Guangzhou), Guangzhou, China.
}%
\thanks{$^{2}$Department of Electrical and Computer
Engineering, National University of Singapore, Singapore.}
\thanks{$^{3}$Idiap Research Institute, Martigny, Switzerland
and EPFL, Lausanne, Switzerland.}
\thanks{$^{4}$Institute of Data Science, The University of Hong Kong, Hong Kong SAR, China.}
}



\markboth{}{}

\maketitle
\renewcommand{\thefootnote}{\fnsymbol{footnote}}
\begin{abstract}
Mobile manipulators promise agile, long-horizon behavior by coordinating base and arm motion, yet whole-body trajectory optimization in cluttered, confined spaces remains difficult due to high-dimensional nonconvexity and the need for fast, accurate collision reasoning. Configuration Space Distance Fields (CDF) enable fixed-base manipulators to model collisions directly in configuration space via smooth, implicit distances. This representation holds strong potential to bypass the nonlinear configuration-to-workspace mapping while preserving accurate whole-body geometry and providing optimization-friendly collision costs. Yet, extending this capability to mobile manipulators is hindered by unbounded workspaces and tighter base–arm coupling. We lift this promise to mobile manipulation with Generalized Configuration Space Distance Fields (GCDF), extending CDF to robots with both translational and rotational joints in unbounded workspaces with tighter base–arm coupling.
We prove that GCDF preserves Euclidean-like local distance structure and accurately encodes whole-body geometry in configuration space, and develop a data generation and training pipeline that yields continuous neural GCDFs with accurate values and gradients, supporting efficient GPU-batched queries. Building on this representation, we develop a high-performance sequential convex optimization framework centered on GCDF-based collision reasoning. The solver scales to large numbers of implicit constraints through (i) online specification of neural constraints, (ii) sparsity-aware active-set detection with parallel batched evaluation across thousands of constraints, and (iii) incremental constraint management for rapid replanning under scene changes. Extensive randomized high-density benchmarks and real-robot experiments demonstrate consistently superior success rates, trajectory quality, and solve times compared to strong baselines, enabling fast, safe, and reliable whole-body planning with minimal initialization effort. Source code is available at the project website\footnote{\url{https://yulinli0.github.io/GCDF/}}.
\end{abstract}

\begin{figure}[t] 
    \setlength{\belowcaptionskip}{-20pt} 
    \centering
    \includegraphics[width=\linewidth, trim={8.5cm 0cm 9.5cm 0.2cm}, clip]{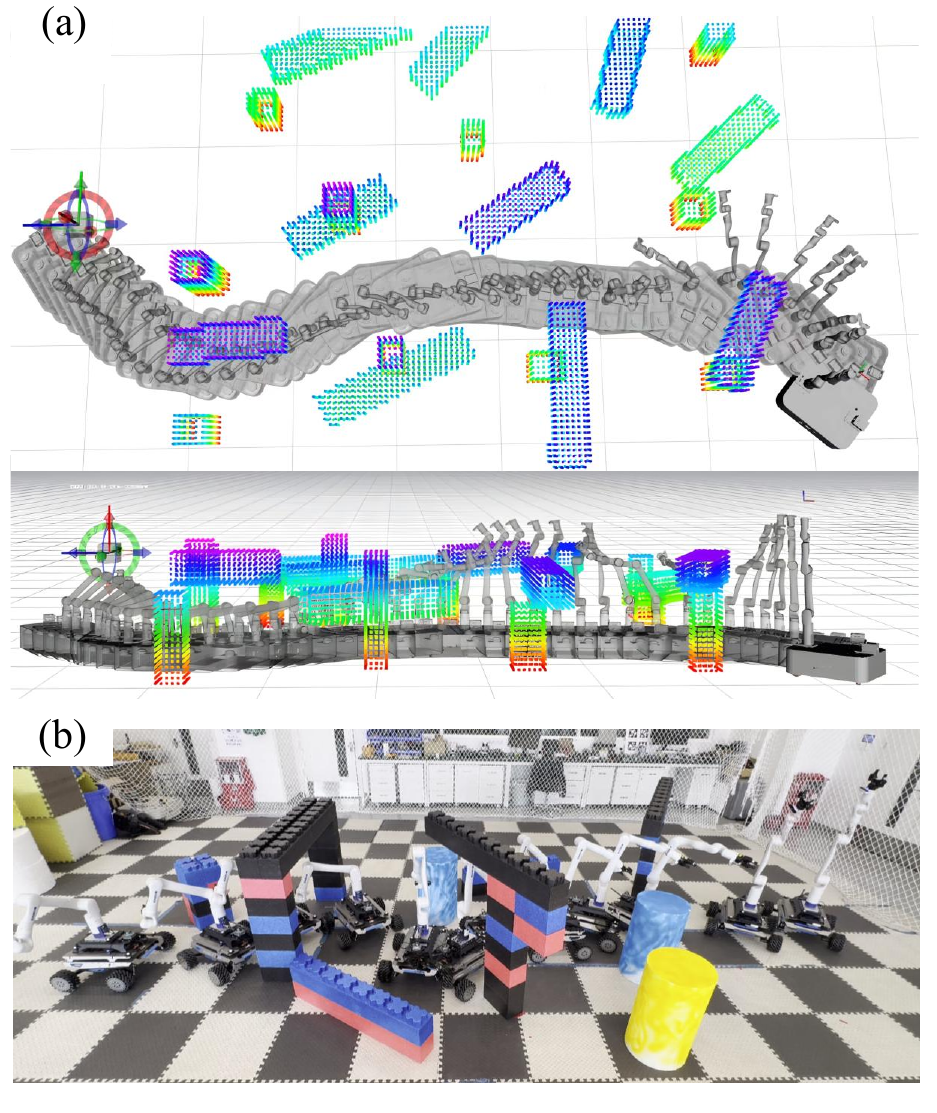}
    \caption{Trajectory generation in complex environments for mobile manipulators using the proposed numerical optimization algorithm with neural Generalized Configuration Space Distance Fields. Starting from a trivial initial guess (the mobile base linearly interpolated between start and goal while the manipulator joints are all zeros, i.e., kept upright), the solver generates collision-free trajectories from scratch, exhibiting smooth and agile maneuvers that leverage whole-body coordination for safe obstacle avoidance. (a) Simulation results are shown from front and top views. (b) Real-world experiments in a similarly cluttered setup.}
    \label{fig:overview}
\end{figure}
\section{Introduction}
\label{sec:introduction}
Mobile manipulators have emerged as essential tools for real-world robotic applications, combining mobility, dexterity, and workspace coverage to enable complex, contact-rich, and long-horizon tasks across both structured and unstructured environments~\cite{6385617,Pilania2018IJRR,Tzes2022ICRA,Michaux2025TRO}. To accomplish these tasks, mobile manipulators must leverage their multiple degrees of freedom to coordinate arm and base motion, continuously reconfiguring whole-body posture to adapt to complex, cluttered environments while maintaining \textit{agility} and ensuring \textit{safety}. However, this synergy of mobility and manipulation introduces a fundamental planning challenge: planners must make rapid, reliable decisions in high-dimensional configuration spaces while respecting collision constraints derived from accurate geometric models of both the robot and the scene, which are essential for achieving agile, non-conservative performance~\cite{howell2022ral-trajopt-optimizationbased-dynamics, 5152817,kalakrishnan2011irca-stomp,Pi2025TASE}. Despite recent advances in perception, whole-body planning, and optimization-based control for mobile manipulation~\cite{pankert2020perceptive,Wu2024ICRA,cHITTA2012RAM,Lillo2023TASE,gIFTTHALER2017ICRA}, fast and reliable trajectory generation in cluttered, confined remains an open challenge.

\textbf{Background.} 
Traditional hierarchical ``move-and-act'' schemes~\cite{8967733,1570432} decompose mobile manipulation into separate planning subproblems for the base and the arm, offering simplicity but constraining coordinated whole-body behaviors. This decoupling neglects the interdependence between base and manipulator, precluding whole-body reconfiguration to expand reachability in confined environments and leading to conservative or infeasible behavior~\cite{4543365,10472786,xia2021relmogenleveragingmotiongeneration}. To overcome these limitations, recent work has shifted toward whole-body planning, typically grouped into three families: sampling-based methods, reinforcement learning (RL), and optimization-based planning.

Sampling-based methods such as PRM/RRT and their variants~\cite{kavraki1996tra-prm,lavalle1998rr-rrt,rrtx,orthey2023sampling} explore feasible connectivity through random sampling and collision checking, offering theoretical guarantees and broad applicability. Recent works extend these to mobile manipulation by unifying base and arm in high-dimensional configuration space~\cite{10026619,9340782}. However, in cluttered environments with narrow passages requiring tight base-arm coordination, uniform or heuristic sampling becomes computationally burdensome and prone to connectivity failures. A couple of frameworks mitigate this by predefining constrained sampling regions or heuristic guidance~\cite{Wu2024ICRA,10472786}, but such strategies reintroduce scenario-dependent tuning and reduce planner generality. RL-based approaches learn control policies end-to-end, directly from state or sensor inputs to actions, typically by interacting with a simulator~\cite{10802201,Hu-RSS-23,10855801,kindle2020whole}. Such methods typically demand substantial simulation training for each new scenario, often requiring millions of timesteps and extensive hyperparameter tuning to achieve strong performance~\cite{10167547}. While effective in structured settings, the resulting policies often generalize poorly to novel layouts and dynamics and are sensitive to distribution shift and partial observability. Moreover, discrepancies in perception, dynamics, and actuation create a sim-to-real gap that can significantly degrade real-world performance~\cite{bi2025interactivenavigationleggedmanipulators}, necessitating careful engineering for reliable transfer.

In contrast, optimization-based methods offer a compelling alternative and serve as the focus of our work. By formulating whole-body mobile manipulation as a constrained optimization problem, they seek feasible, collision-free trajectories while optimizing task-relevant objectives such as smoothness, energy efficiency, or execution time~\cite{10610753,9341134}.
A key merit of this paradigm is its robustness and generalizability: once the environment is represented in a unified geometric~\cite{chunxin2025RAL-localreactive} or distance-field form~\cite{9981318}, the same optimization framework can be applied across diverse scenarios without scenario-specific retraining, enabling rapid adaptation to novel layouts and dynamic changes. Moreover, the kinematic and actuation redundancy inherent in mobile manipulators naturally lends itself to optimization techniques, allowing the system to exploit its full configuration space to generate agile, safe maneuvers that reach targets while simultaneously satisfying task-space collision avoidance and other operational constraints.

\textbf{Challenges. }Unlike fixed-base manipulators, mobile manipulators operate in unbounded workspaces and must navigate complex, cluttered obstacle layouts. Current state-of-the-art methods adopt a two-stage pipeline with front-end path search followed by back-end trajectory optimization~\cite{Wu2024ICRA,xu2025topayefficienttrajectoryplanning}. In practice, the front end carries the primary burden through elaborate, multi-layer search strategies aimed at reducing sampling difficulty over the combined translational and rotational motion space. A common approach decouples the problem by first planning a base trajectory and then searching for arm configurations along this fixed base path. The back-end optimizer is consequently reduced to a post-processing module that smooths and slightly refines the front-end solution locally, often converging within only a few iterations.

Despite its computational efficiency, this paradigm suffers from fundamental limitations in both stages. On the front-end side, decoupling the base and arm search undermines holistic whole-body coordination. In complex environments where simultaneous coordination is required amid intricate obstacle layouts, the front-end search can fail; this would cause the entire pipeline to collapse since the back-end optimizer is constrained to local refinements near the front-end solution. Once the front end produces a poor-quality trajectory, the optimizer cannot recover, despite its potential exploration capability.
On the back-end side, these methods typically rely on the environment Euclidean signed distance field (ESDF) and approximate the robot with collision spheres. While this reduces collision constraints complexity, it creates a fundamental mismatch: collision costs are defined in workspace as the sum of ESDF penalties over spheres, whereas the position of each sphere is coupled through highly nonlinear forward kinematics. Moreover, a sparse set of spheres can inaccurately represent the true collision geometry, leading to overly conservative or unsafe clearance estimates. From the optimization perspective, the nonlinear coupling yields a strongly non-convex landscape; with imperfect initial trajectories, gradient-based solvers easily become trapped in poor local minima as workspace gradients induce misleading configuration-space updates.

Overall, the difficulty stems from three root causes: (\emph{i}) the nonlinear mapping between configuration-space variables and workspace-based collision constraints; (\emph{ii}) the inherent computational intractability of enforcing whole-body collision avoidance with precise link geometries~\cite{li2024collisionfreetrajectoryoptimizationcluttered,10160716}, which necessitates simplified proxies (e.g., spheres or point sets~\cite{9561821,11119564}) that compromise geometric fidelity; (\emph{iii}) the unbounded workspace and intricate obstacle layouts, which make the optimizer highly sensitive to initialization yet render high-quality initial guesses non-trivial to obtain. It leaves an interesting question:   
\emph{Can we enforce collision avoidance directly in configuration space to circumvent the intricate nonlinear mapping between configuration variables and workspace constraints while preserving accurate robot geometries?}


Recently, inspired by the robot-centric signed distance field (SDF)~\cite{Aude2023RAL,Liu2022IROS,Li2024ICRA}, which measures the Euclidean distance between a robot surface and environment points, the concept of the Configuration Space Distance Field (CDF) has been proposed to measure the minimum distance from a current robot configuration to environment points using configuration metrics for robot manipulators~\cite{li2024RSS-CDF}. Specifically, for an obstacle point $\bm p$, CDF measures the minimum \textit{configuration space distance} between the current configuration and the zero-level configuration set, where the latter comprises all configurations that result in contact between any point on the robot's surface and $\bm p$. Thus, CDF naturally incorporates whole-body geometries without conservative approximation, while simultaneously embedding the inverse kinematic problem within its formulation to maintain Euclidean distance properties in the high-dimensional configuration space despite the highly nonlinear propagation along kinematic chains.

Crucially, we recognize that these properties make CDF particularly well-suited for gradient-based trajectory optimization. The constraint Jacobian matrices derived from CDF provide accurate local models that facilitate faster convergence in iterative solvers~\cite{jordana2023stagewise,sqp-method,doi:10.1177/0278364914528132}, while their continuous neural representation enables efficient batched queries of collision constraints and their derivatives across trajectory waypoints. This naturally motivates us to investigate how CDF can be effectively integrated into trajectory optimization frameworks to address the computational challenges outlined above.

\textbf{Contributions. }
In this work, we address the following research problems: (\emph{i}) How to generalize CDF to mobile manipulators with additional translational DoFs in an unbounded workspace, while preserving its merits for purely rotational joints. (\emph{ii}) How to learn implicit neural CDF representations for mobile manipulators that provide accurate values and gradients, while enabling fast, parallel online queries for numerical optimization. (\emph{iii}) How to design a high-performance numerical optimization algorithm with CDF-based constraints that enforces collision avoidance directly in configuration space, enabling rapid generation of agile, collision-free trajectories in complex obstacle layouts, even from naive initial guesses.

Specifically, we first propose the concept of Generalized CDF (GCDF) for robots with both translational and rotational joints. We show that, from both theoretical and experimental perspectives, the properties of the original CDF can be extended to GCDF with minimal modification. Unlike fixed-base manipulators, the introduction of translational dimensions leads to exponential growth in data requirements, and the quality and coverage of data on the zero-level set significantly impact both training effectiveness and subsequent optimization performance. To solve these, we carefully develop a data collection and training pipeline based on the established theories to obtain neural CDF functions for mobile manipulators, yielding continuous and accurate implicit representations that support efficient GPU-accelerated queries and are well-suited for integration with numerical optimization.

Second, we develop and open-source a high-performance C++ sequential convex optimization solver for large-scale trajectory optimization with implicit GCDF constraints. The solver employs an $\ell_1$-penalty formulation and iteratively solves QP subproblems using first-order constraint information within a local trust region, together with globalization mechanisms that enable recovery from initial guesses far from the optimum. To support neural implicit collision reasoning, we extend the framework to (a) transform neural network outputs as implicit constraints; (b) batch-evaluate constraint values and gradients in parallel and exploit the sparsity pattern of active collision constraints at each iteration, substantially reducing both problem assembly and solve time under thousands of constraints; and (c) allow online injection and removal of GCDF constraints without problem reconstruction, which is critical for rapid replanning as robot--obstacle interactions change. As previewed in Fig.~\ref{fig:overview}, the resulting solver optimizes directly in configuration space and can recover from a trivial initial guess to produce agile, collision-free motions in highly cluttered scenes.

We emphasize that obstacle density and distribution complexity are critical factors for validating planning performance, as both the computational burden (due to the number of constraints) and the difficulty of escaping inferior local minima increase dramatically with environmental complexity. Finally, to rigorously validate the superiority of our framework, we design multiple randomized high-density obstacle scenarios and conduct extensive comparisons with benchmark methods in these challenging environments. Results demonstrate that the developed numerical algorithm, with GCDF constraints, can rapidly compute safe, robust, and non-trivial whole-body motions even in highly cluttered spaces. We further deploy the algorithm on a real mobile manipulator system in complex real-world scenarios, demonstrating its robustness and effectiveness in practical applications.     

To summarize, our contributions are:
    
    
    
\begin{itemize}
    \item \textbf{Generalized CDF.} We introduce GCDF for robots with both translational and rotational joints, and show, both theoretically and empirically, that core CDF properties carry over with minimal modification.
    \item \textbf{Neural GCDF.} We develop a scalable data generation and training pipeline that avoids data explosion in unbounded workspaces by reconstructing high-quality coverage of the zero-level set through translation-equivariant aggregation of grid-based subsets, yielding neural GCDFs with accurate values/gradients and fast GPU-parallel queries.
        \item \textbf{Solver Implementation.} We release an open-source high-performance C++ solver that enforces collision avoidance directly in configuration space via neural implicit GCDF constraints, with online constraint specification, batched GPU evaluation, and sparsity-aware handling of large-scale constraints.
    \item \textbf{Performance.} Our method achieves consistently superior performance for rapid, safe, and reliable collision-free whole-body planning, outperforming strong baselines in randomized high-density clutter and validating on real hardware.
\end{itemize}



\textbf{Paper organization.} We review related work in Section \ref{sec:relatedworks}. We present the theory and training pipeline for GCDF in Section~\ref{sec:method:cdf-mobile-manipulator}, followed by its integration with the numerical algorithm for trajectory optimization in Section~\ref{sec:method:trajectory-optimization}. Experimental results are demonstrated in Section~\ref{sec:exp}. We conclude with future directions in Section~\ref{sec:conclusion}.
\section{Related Works}
\label{sec:relatedworks}
\subsection{Collision Avoidance}
\label{sec:relatedworks:collision_avoidance_considerations}
Approaches to collision avoidance in robot motion planning can be broadly categorized into two distinct paradigms, defined by their underlying representation of safety. The first paradigm focuses on explicitly constructing a representation of the collision-free space, and the second paradigm operates by modeling collision pairs and enforcing safety margins.

\subsubsection{Motion Planning in Collision-Free Space}
This category treats motion planning as a two-stage process: first constructing a structural representation of the collision-free space, then computing feasible trajectories within that structure. Classical approaches, including cell decomposition~\cite{latombe2012robot,SCHWARTZ1983298}, visibility graphs~\cite{asano1985visibility,lozano1979algorithm}, and Voronoi diagrams~\cite{4539723}, systematically partition or reduce the configuration space into discrete geometric structures. While effective in low-dimensional settings, these deterministic methods become computationally intractable in high-dimensional spaces. Sampling-based planners~\cite{kavraki1996probabilistic,lavalle1998rapidly,rrtx} address this by probabilistically capturing connectivity, achieving widespread adoption in robot manipulations~\cite{6377468,coleman2014reducingbarrierentrycomplex,5152399}.
More recently, modern optimization-based approaches favor decomposing free space into sequences of convex regions~\cite{deits2015computing,wang2025fast,7839930,toumieh2022voxel,10802782}. These methods grow convex polytopes in collision-free space and then constrain robot motion by enforcing containment relationships between robot geometry and these safe regions~\cite{10189357,9268421}. Depending on the geometric representation, such as points~\cite{7839930,tordesillas2021faster}, polytopes~\cite{dai2023certified,10950074}, or semialgebraic sets~\cite{li2024collisionfreetrajectoryoptimizationcluttered}, different containment constraints can be formulated within the optimization framework. This paradigm has been extended to manipulators and mobile manipulators by allocating separate convex regions to each link~\cite{10950074,chunxin2025RAL-localreactive,9561821}, enabling coordinated whole-body planning.
\subsubsection{Safety Margins Between Collision Pairs}
The second category focuses on explicitly modeling the geometric relationship between robot and obstacles to enforce positive distance margins. Compared to convex decomposition methods, which face combinatorial challenges in selecting which free region the robot should occupy at each time step~\cite{tordesillas2021faster,deits2015icra-mixedintegerprogramming-uav} and ensuring safe transitions between regions~\cite{li2024collisionfreetrajectoryoptimizationcluttered} (as overlapping quality between adjacent regions is often difficult to guarantee), distance-based methods provide a more direct and complete solution: \textit{safety can be absolutely guaranteed by enforcing minimum distance thresholds between the robot and all obstacles.}
The most straightforward approach explicitly constructs distance functions between robot-obstacle collision pairs and enforces safety thresholds~\cite{5152817,kalakrishnan2011irca-stomp,mellinger2011minimum,ma2022alternating}. However, distance functions for general geometries in complex configuration spaces are inherently difficult to obtain, typically leading to oversimplified collision representations such as points or spheres. Some indirect methods have been developed to handle more general shapes for single rigid bodies: control barrier functions (CBFs) based on the dual formulation of minimum distance between polytopes~\cite{9812334}, and bi-level optimization that enforces scaling factors on geometric primitives to ensure collision-free configurations~\cite{10160716}. Another approach computes separating hyperplanes between the robot and obstacle convex hulls~\cite{9490372,nair2022collision}. In principle, these methods jointly optimize both the trajectory variables and the hyperplane parameters at each time step, ensuring that a separating plane can be found between the robot and obstacles throughout the motion.

However, for multi-link robotic systems such as manipulators and mobile manipulators, these direct approaches remain difficult to generalize due to kinematic coupling between links and the high dimensionality of the configuration space. As an alternative, distance fields have been widely adopted. The principle behind this approach is to build an implicit representation that encodes the minimum distance between environmental points and their nearest robot surface points~\cite{Ante-24RAL-Online-Learing-SDF,Li2024ICRA,JJPARK-19CVPR-LearningSDF} or nearest obstacle points~\cite{Helen-17IROS-VOXBLOX,Zhong-23ICRA-ShineMapping,Ortiz:etal:iSDF2022}. This implicit structure allows querying minimum distance values and their gradients with respect to robot configurations, which can be utilized to detect collisions and generate repulsive forces that push the robot away from obstacles~\cite{li2024RSS-CDF,bhardwaj2022storm}. These methods have great potential for application to robotic systems with complex configurations and are the focus of this paper. We provide a comprehensive review of distance fields and their evolution toward configuration-space representations in Section~\ref{sec:relatedworks:distance_fields}.

\subsection{Distance Fields and their Applications}
\label{sec:relatedworks:distance_fields}
Distance fields, and in particular SDFs, encode at every point the signed distance to the closest surface. This scalar field captures three essential aspects of safety in a unified way: the zero level set indicates whether a point is in collision, the distance value measures how much clearance it has, and the gradient specifies in which direction clearance increases. In robotics, this makes SDFs a natural choice for linking perception and motion generation. Voxel-based TSDF/ESDF maps support dense 3D reconstruction and local planning for mobile robots and MAVs~\cite{Helen-17IROS-VOXBLOX,9981318}, while continuous and neural SDFs~\cite{Ortiz:etal:iSDF2022,vasilopoulos2024hio,Ante-24RAL-Online-Learing-SDF} provide compact scene representations with efficient distance and gradient queries. These environment-centric distance fields are routinely used as smooth collision costs or repulsive potentials in trajectory optimization~\cite{5152817} and MPC-based planners~\cite{koptev2024reactive}, giving motion-generation algorithms a continuous ``safety landscape'' instead of a binary collision oracle.

More recently, there has been a shift from environment-centric to robot-centric distance fields, in which the \emph{robot} itself is represented as a distance field. This removes the need to update a global scene SDF online as the environment changes, enabling more efficient distance queries and improving responsiveness in dynamic scenarios~\cite{Aude2023RAL,Li2024ICRA}. Moreover, a robot SDF provides derivatives with respect to joint angles, encoding both proximity to collision and directions in joint space that increase clearance, which aligns well with gradient-based motion optimization. A central challenge is how to couple the distance representation with the robot joint configuration. Existing methods address this in two principal ways: kinematic-chain-based models explicitly propagate fixed link-wise SDFs through forward kinematics, whereas end-to-end learning approaches embed geometry and kinematics into a single joint-dependent distance function. Methods such as~\cite{Li2024ICRA,liu2023collision,zhu2025efficient,chen2025implicit} explicitly encode link geometries and compose them along the known kinematic chain, preserving geometric fidelity and yielding well-structured derivatives from task space to joint space, which is advantageous in scenarios with tight clearances and detailed contact reasoning. In contrast, end-to-end neural approaches learn a single implicit model that maps joint configurations and workspace points directly to distances and gradients~\cite{Aude2023RAL,Liu2022IROS}. These models trade some geometric accuracy for highly efficient, uniform inference, making them suitable for high-frequency reactive control and large-scale motion generation where many distance queries must be evaluated online. In both cases, representing the robot as a distance field yields a unified, differentiable representation for reasoning about self-collision, contact, and clearance.

While most distance-field methods operate in task space, there is increasing interest in lifting distance reasoning directly to configuration space. CDF~\cite{li2024RSS-CDF} defines a scalar field over joint configurations, where the value at each configuration equals the minimal joint motion required to reach the collision manifold, with its gradient providing the corresponding avoidance direction. In contrast to task-space SDFs, CDF satisfies an eikonal property in configuration space: its level sets are uniformly spaced, and its gradient has unit norm and consistently points away from the collision set. This eliminates distortions or vanishing gradients induced by the nonlinear mapping between task space and joint space, and enables one-step gradient projection for inverse kinematics. Consequently, motion generation schemes built around distance fields can be applied directly in joint space by replacing a workspace SDF with a joint-space CDF. Building on this idea, subsequent work has leveraged CDF for motion planning and control by integrating it with barrier-function formulations~\cite{long2025neural}, gradient-based trajectory optimization~\cite{chi2024safe}, and model predictive path integral control~\cite{li2025one}, highlighting CDF as an effective representation for efficient and safe motion generation in joint space.
However, these approaches focus primarily on fixed-base manipulators, and CDF for mobile manipulators where the configuration space couples Euclidean base motion with joint orientations, have not yet been systematically explored. Moreover, efficient and robust numerical solvers capable of handling such structured, implicitly defined constraints are still lacking, limiting the practicality of existing methods.


\section{Neural Configuration Space Distance Field for Mobile Manipulators}
\label{sec:method:cdf-mobile-manipulator}
\subsection{Preliminaries}
Let us first recall the original definition of CDF.

\subsubsection{Signed Distance Field}
Before introducing the definition of CDF, we first examine SDF, a widely adopted concept in collision checking and avoidance tasks, as it helps illuminate the advantages of CDF.

SDF was originally defined in task space as the distance from a query point $\boldsymbol{p}$ to the boundary of a set $\Omega$, with positive and negative signs indicating whether the point lies outside or inside $\Omega$, respectively. As illustrated in Fig.~\ref{fig:sdf_mobile_manipulator}-\emph{left}
, this concept has been extended to represent the distance between a workspace point $\boldsymbol{p}$ and the robot surface, denoted as $\partial r$, at a given configuration $\boldsymbol{q}$:
\begin{equation}
\label{eq:sdf-cspace}
f_s(\bm{p}, \bm{q}) = \pm \min_{\bm{p}' \in \partial r(\bm{q})} \|\bm{p'} - \bm p \|.
\end{equation}
While the signed distance $f_s$ maintains Euclidean properties in workspace (or task space), the forward kinematics embedded in the manifold constraint $\bm {p'}\in \partial r(\bm q)$ introduces significant nonlinearity in configuration space, where the optimization is conducted. 
\begin{figure}[htbp] 
    \centering
    \includegraphics[width=0.5\textwidth, trim={0cm 0cm 0cm 0cm}, clip]{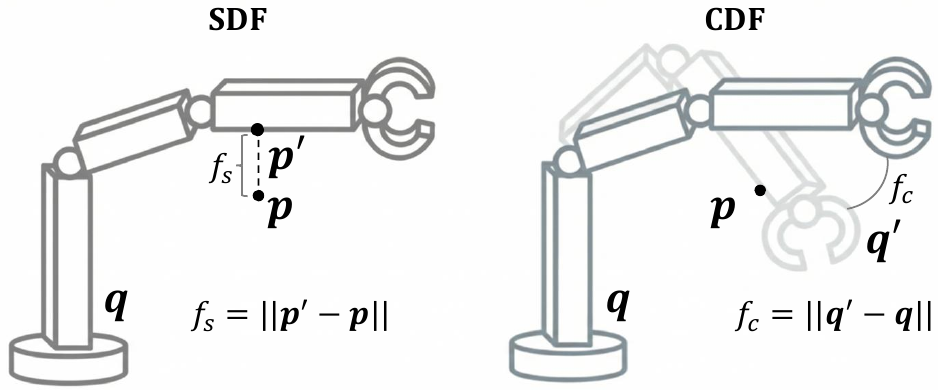}
    \vspace{.8mm}
    \caption{Illustration of SDF (\emph{left}) and CDF (\emph{right}) for a query point $\bm p$ at configuration $\bm q$. SDF measures the task-space distance from $\bm p$ to the nearest point $\bm p'$ on the robot surface, as defined in~\eqref{eq:sdf-cspace}. In contrast, CDF measures the configuration-space (angular) distance from $\bm q$ to the nearest contact configuration $\bm q'$ on the zero-level set induced by $\bm p$, as defined in~\eqref{eq:cdf}.}
    \label{fig:sdf_mobile_manipulator}
\end{figure}

\subsubsection{Configuration Space Distance Field} 
We now present the concept of CDF. In contrast to the minimum distance in task space as defined in (\ref{eq:sdf-cspace}), CDF is defined as the minimum distance in configuration space. Specifically, for an articulated robotic system with pure rotational joints, this represents the minimum distance in radius~\cite{li2024RSS-CDF}, as illustrated in Fig.~\ref{fig:sdf_mobile_manipulator}-\emph{right}.

First, we define the zero-level configuration set for a given workspace point $\bm p$ as:
\begin{equation}
\label{eq:zero-level-set}
\calZ(\bm p) = \{\bm q \mid f_s(\bm p,\bm q) = 0\},
\end{equation}
where, based on the SDF definition in (\ref{eq:sdf-cspace}), $\calZ$ represents the set of all configurations where $\bm p$ lies on the robot surface. The CDF is then formally defined as:
\begin{equation}
\label{eq:cdf}
f_c(\bm p, \bm q) = \min_{\bm{q'} \in \calZ(\bm p)}\|\bm{q'} - \bm q\|.
\end{equation}
For a given point $\bm p$, CDF measures the minimum radial distance from the current configuration to any configuration that results in contact with $\bm p$. 
\begin{remark}
    CDF effectively bridges task space and configuration space while preserving several advantageous properties: it maintains the implicit structure and Boolean operations of SDF, and more significantly, it preserves the uniform gradient property with Euclidean distance in configuration space. These characteristics collectively make CDF an ideal candidate for integration with gradient-based numerical optimization algorithms, enabling simultaneous safety guarantees and task achievement. Moreover, the efficient data collection and network training proposed in~\cite{li2024RSS-CDF} provides a practical pipeline for neural network-encoded CDF function that enables fast batch-parallel computation of both distance values and gradients during online operation. 
\end{remark}

\subsection{Configuration Space Distance Field for Mobile Manipulator}
\subsubsection{Generalized CDF}
Originally, when training the CDF function, only rotational joints are considered since the planar translation motion (i.e., $\bm q^t$) would make the workspace unbounded, and there are inherently different scales between translational and rotational movements.

Theoretically, the shift caused by base translation can be mitigated by transforming obstacle points to the mobile base frame, and the gradients with respect to translational degrees of freedom can be obtained through the chain rule by utilizing the gradients of obstacle point positions. However, in practical training scenarios, fixing the mobile base significantly constrains the actual workspace and confines the reachable space. This leads to insufficient and non-general zero-level-set data, resulting in obstacle gradients that lack physical intuition. Therefore, in our training process, we redefine the CDF to incorporate translational degrees of freedom while introducing appropriate scaling factors to balance translational distances and rotational angles. Similar to (\ref{eq:cdf}), we introduce the definition of the generalized CDF with translational Dofs:
\begin{tcolorbox}[colback=gray!10, colframe=gray!80, boxrule=0.5pt, arc=2mm]
\begin{definition}[Generalized CDF]
\label{def:cdf_generalized}
For a robot configuration $\boldsymbol{q}\in\mathbb{R}^{n}$ comprising a translational component $\bm q^t$ and a rotational component $\bm q^r$, and an environmental point $\boldsymbol{p}\in\mathbb{R}^{3}$, the generalized configuration space distance field (GCDF) $f^g_c(\bm p, \bm q):\mathbb{R}^{n}\times\mathbb{R}^{3} \rightarrow \mathbb{R}$ is:
\begin{equation}
\label{eq:cdf-weighted}
f^g_c(\bm p, \bm q) = \pm \min_{\bm{q'} \in \calZ(\bm p)}\|\bm{q'} - \bm q\|_{\bm M},
\end{equation}
where $\calZ(\bm p)$ denotes the zero level set manifold of configurations satisfying the contact condition at point $\bm p$, and ${\bm M} \in \bm{S}_+^n$ is a positive semidefinite diagonal weighting matrix that induces the weighted norm $\|\cdot\|_{\bm M}$. The positive
and negative signs indicate whether $\bm p$ lies outside or
inside the robot surface $\Omega(\bm q)$.
\end{definition}
\end{tcolorbox}
\begin{remark}
The introduction of the weighting matrix ${\bm M}$ in this generalized formulation serves two crucial purposes. First, it controls the relative influence of translational and rotational distances on the CDF value. Second, it shapes the characteristics of the nearest configuration by influencing the gradient direction and consequently the robot's motion tendency in proximity to objects. 
\end{remark}

A key theoretical foundation of the original CDF is its satisfaction of the \textit{eikonal equation} in configuration space, which ensures unit gradient norms and enables single-step projections to the zero level set. This property theoretically guarantees closed-form whole-body inverse kinematic solutions without numerical iterations. To maintain this valuable property during training, gradient norm and projection error regularization terms are incorporated into the loss function. However, when extending CDF to the full configuration space with weighted norms, a critical issue emerges: following the gradient direction no longer guarantees reaching the zero level set surface, and the optimal step size becomes undetermined. This challenges both the theoretical properties and their corresponding training objectives. Following the spirit of the original CDF framework, we extend its theoretical properties to the weighted configuration space through the following theorem:
\begin{tcolorbox}[colback=gray!10, colframe=gray!80, boxrule=0.5pt, arc=2mm]
\begin{theorem}[Properties of GCDF]
\label{thm:gcdf_properties}
    For any configuration $\bm q$ and environmental points $\bm p$ where $f^g_c(\bm p, \bm q)$ is differentiable with respect to $\bm q$, the following properties hold:
    \begin{enumerate}
        \item (Weighted Eikonal Equation). The partial derivative of $f^g_c$ with respect to $\bm q$ satisfies the weighted eikonal equation:
        $$\|\nabla_{\bm q} f^g_c(\bm p, \bm q)\|_{{\bm M}^{-1}} = 1$$
        \item (Single-Step Projection). For any initial configuration $\bm q_0$, its closest configuration $\bm q_z \in \calZ(\bm p)$ can be reached through a single-step projection:
        $$
        \bm q_z = \bm q_0 - \lambda d,
        $$
        where $\lambda = f^g_c(\bm p, \bm q_0)$ denotes the step length and $d = {\bm M}^{-1}\nabla_{\bm q} f^g_c(\bm p, \bm q_0)$ defines the projection direction.
    \end{enumerate}
\end{theorem}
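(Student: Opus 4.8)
The plan is to prove both properties by a linear change of variables that turns the weighted norm $\|\cdot\|_{\bm M}$ into the ordinary Euclidean norm, thereby reducing the claim to the already-established unweighted eikonal and single-step-projection properties of the original CDF in~\cite{li2024RSS-CDF}. Since the theorem invokes $\bm M^{-1}$, I take $\bm M$ to be (diagonal) positive definite, so its symmetric square root $\bm M^{1/2}$ and inverse exist. Fixing the query point $\bm p$, I would write $g(\bm q) := f^g_c(\bm p,\bm q)$ and introduce the whitened coordinate $\tilde{\bm q} := \bm M^{1/2}\bm q$. Because $\|\bm q' - \bm q\|_{\bm M} = \|\bm M^{1/2}(\bm q'-\bm q)\|_2 = \|\tilde{\bm q}' - \tilde{\bm q}\|_2$, the minimization in~\eqref{eq:cdf-weighted} becomes an ordinary Euclidean distance from $\tilde{\bm q}$ to the transformed zero-level set $\tilde{\calZ}(\bm p) := \bm M^{1/2}\calZ(\bm p)$. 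Hence $g(\bm q) = \tilde g(\tilde{\bm q})$, where $\tilde g(\tilde{\bm q}) := \min_{\tilde{\bm q}'\in\tilde\calZ(\bm p)}\|\tilde{\bm q}'-\tilde{\bm q}\|_2$ is a genuine Euclidean distance-to-a-set function.

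Next I would invoke the standard (unweighted) properties of $\tilde g$. At any $\bm q$ where $g$ --- equivalently $\tilde g$ --- is differentiable, the nearest point $\tilde{\bm q}_z\in\tilde\calZ(\bm p)$ is unique, the Euclidean eikonal equation $\|\nabla_{\tilde{\bm q}}\tilde g\|_2 = 1$ holds, and the closest-point projection is given in closed form by $\tilde{\bm q}_z = \tilde{\bm q}_0 - \tilde g(\tilde{\bm q}_0)\,\nabla_{\tilde{\bm q}}\tilde g(\tilde{\bm q}_0)$, since the gradient of a Euclidean distance is the unit vector pointing away from the nearest point. This is exactly the unweighted CDF result applied to the set $\tilde\calZ(\bm p)$.

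It then remains to push these identities back through the change of variables via the chain rule. From $\tilde{\bm q} = \bm M^{1/2}\bm q$ and the symmetry of $\bm M^{1/2}$, I obtain $\nabla_{\bm q} g = \bm M^{1/2}\nabla_{\tilde{\bm q}}\tilde g$, equivalently $\nabla_{\tilde{\bm q}}\tilde g = \bm M^{-1/2}\nabla_{\bm q} g$. Substituting into the Euclidean eikonal identity gives
\[
\|\nabla_{\bm q} g\|_{\bm M^{-1}}^2 = (\nabla_{\bm q}g)\tran\,\bm M^{-1}\,(\nabla_{\bm q}g) = \|\nabla_{\tilde{\bm q}}\tilde g\|_2^2 = 1,
\]
which is Property 1. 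For Property 2, expressing the Euclidean projection formula in the original coordinates with $\lambda = g(\bm q_0) = \tilde g(\tilde{\bm q}_0)$ and left-multiplying by $\bm M^{-1/2}$ yields
\[
\bm q_z = \bm q_0 - \lambda\,\bm M^{-1/2}\nabla_{\tilde{\bm q}}\tilde g = \bm q_0 - \lambda\,\bm M^{-1}\nabla_{\bm q} g,
\]
so $d = \bm M^{-1}\nabla_{\bm q} g$ is precisely the claimed projection direction.

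The algebra above is mechanical; the substantive points requiring care are: (i) justifying that $\bm M$ must in fact be positive definite (not merely semidefinite as in Definition~\ref{def:cdf_generalized}) for $\bm M^{1/2}$, $\bm M^{-1}$, and the whitening to be well posed; (ii) confirming that differentiability of $f^g_c$ at $\bm q$ transfers to differentiability of $\tilde g$ at $\tilde{\bm q}$ and guarantees a unique nearest point, which is what licenses both the closed-form gradient and the single-step projection; and (iii) tracking the sign convention so the $\pm$ in~\eqref{eq:cdf-weighted} is consistent with the direction of $\nabla_{\bm q} f^g_c$ on either side of $\Omega(\bm q)$. I expect (ii) --- the interaction between differentiability, uniqueness of the projection, and the linear map $\bm M^{1/2}$ --- to be the main technical obstacle, since everything else follows routinely once the whitening is in place.
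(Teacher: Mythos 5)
Your proof is correct, but it takes a genuinely different route from the paper's. The paper argues directly: it writes $f^g_c(\bm p,\bm q)=\sqrt{(\bm q-\bm q_z)\tran\bm M(\bm q-\bm q_z)}$ for the nearest point $\bm q_z$, differentiates this expression in $\bm q$ (an implicit envelope-theorem step, with $\bm q_z$ held fixed) to get $\nabla_{\bm q}f^g_c=\bm M(\bm q-\bm q_z)/f^g_c$, and then verifies Property 1 by substitution; for Property 2 it first shows that stepping along the raw gradient cannot work (the required step length $\|\bm v\|^2/\sqrt{\bm v\tran\bm M\bm v}$ depends on the unknown $\bm q_z$), and then solves explicitly for the step length along $\bm M^{-1}\nabla_{\bm q}f^g_c$, obtaining $\lambda=f^g_c$. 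Your whitening argument ($\tilde{\bm q}=\bm M^{1/2}\bm q$, reducing the weighted distance to a Euclidean distance-to-set and pulling the standard unweighted eikonal and closest-point-projection facts back through the chain rule) reaches the same conclusions and is more modular: it makes transparent that the weighted theory is isometric to the unweighted one, handles both properties uniformly from one geometric fact, and would extend verbatim to any positive definite (not necessarily diagonal) $\bm M$. What it costs you is self-containedness — you outsource exactly the step the paper proves inline (that differentiability yields a unique nearest point and the explicit gradient formula) to the unweighted CDF result — and it loses the paper's pedagogical payoff of showing \emph{why} the projection direction must be $\bm M^{-1}\nabla_{\bm q}f^g_c$ rather than the gradient itself. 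The three caveats you flag are real but minor and are shared by the paper's own proof: positive definiteness of $\bm M$ is indeed implicitly required (the statement uses $\bm M^{-1}$ despite Definition~\ref{def:cdf_generalized} saying semidefinite); differentiability transfers trivially since $\bm M^{1/2}$ is an invertible linear map; and the sign convention issue near and inside $\Omega(\bm q)$ is likewise glossed over in the paper, whose computation tacitly assumes $f^g_c>0$.
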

\begin{proof}
See~\prettyref{app:gcdf_properties}.
\end{proof}
\end{tcolorbox}
While Theorem~\ref{thm:gcdf_properties} provides theoretical guidance for training effective GCDF and enables finding the nearest configurations on the zero-level set through biased gradient projection, practical considerations necessitate the signed formulation in Definition~\ref{def:cdf_generalized}. Specifically, the unsigned nature of the original CDF definition in~(\ref{eq:cdf}) would cause numerical issues for gradient-based optimization when the robot is currently intersecting with obstacles.  
To illustrate this, consider the one-dimensional case shown in Fig.~\ref{fig:cdf_one_dimension}. If the configuration $\bm q$ to be optimized initially lies between two points on the zero-level set, gradient-based numerical methods theoretically cannot escape the local maximum at $\bm q_0$ (\ie, pushing the obstacle away from the robot). Therefore, the signed GCDF formulation resolves this by reversing the sign for penetrating configurations, ensuring gradients consistently point toward collision-free space.

\begin{figure}[htbp]
\setlength{\belowcaptionskip}{-10pt}
\centering
\includegraphics[width=0.4\textwidth, trim={0cm 0cm 0cm 0cm}, clip]{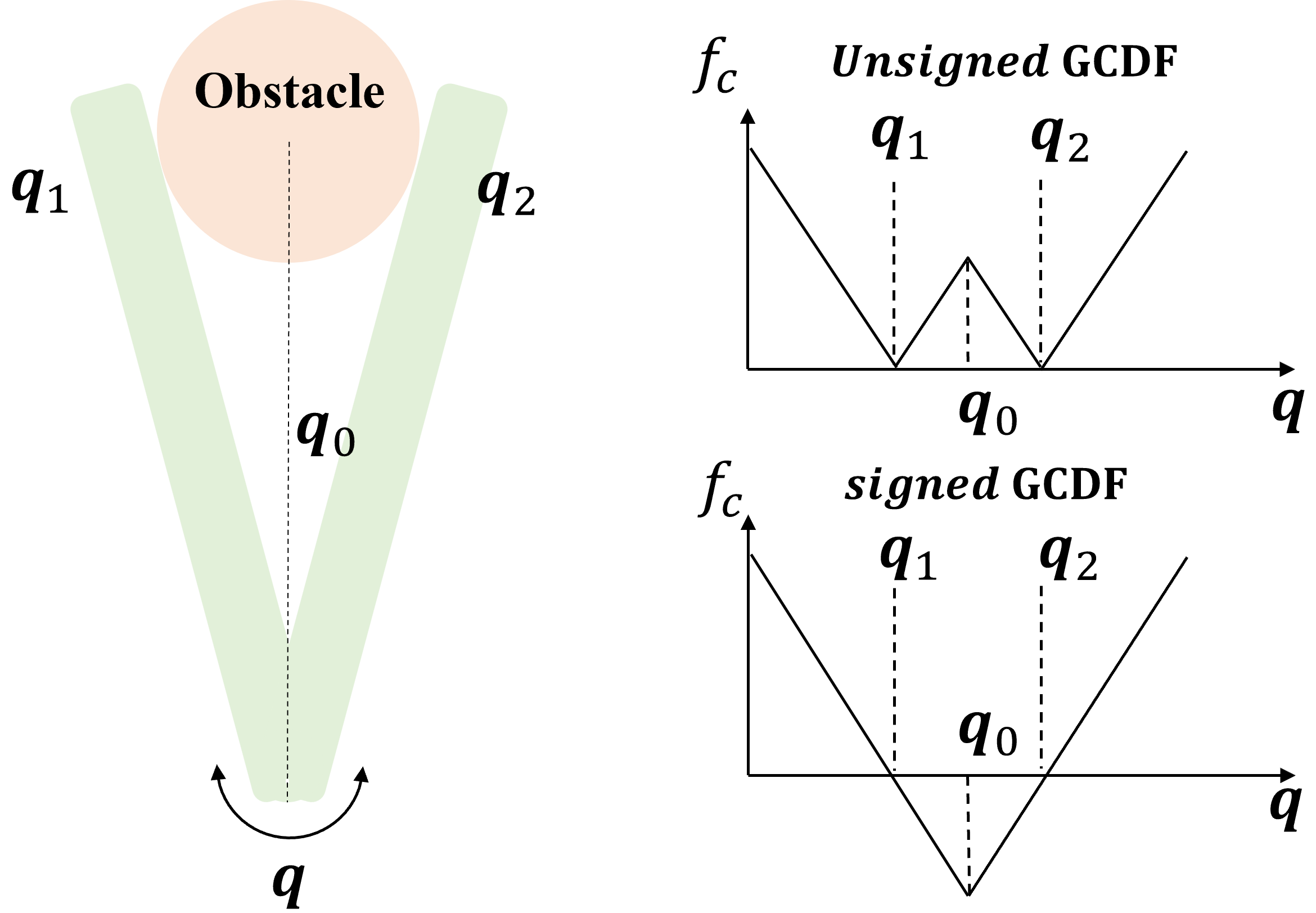}
\caption{Illustration of GCDF values for a 1-DoF planar arm. The obstacle induces two symmetric contact configurations on the zero-level set, denoted by $\bm q_1$ and $\bm q_2$. For the \emph{unsigned} GCDF, the distance function exhibits a spurious local maximum at $\bm q_0$, where the obstacle is equally distant from both arm sides, which can mislead gradient-based numerical solvers. Introducing the sign based on penetration removes this pathology, yielding smooth and consistent gradient directions toward the contact set.}

\label{fig:cdf_one_dimension}
\end{figure}

\subsection{Neural GCDF}
Having established the definition and properties of GCDF, we now present the training methodology for neural GCDF representations mapping from workspace-configuration pairs to their corresponding GCDF values, which ensure high-quality distance fields suitable for numerical optimization. Our pipeline generally follows the procedure proposed in~\cite{li2024RSS-CDF}:

\begin{enumerate}[label=(\roman*)]
    \item \textbf{Robot SDF Construction.} We obtain the SDF representation of the mobile manipulator, i.e., $f_s(\bm p, \bm q)$, using the method in~\cite{Li2024ICRA}, enabling efficient distance queries and gradient computation.
    
    \item \textbf{Zero-Level Set Approximation.} We collect a workspace point set $\mathcal{P} \subset \mathbb{R}^{3}$. For each $\bm p \in \mathcal{P}$, we approximate the zero-level set $\mathcal{Z}(\bm p)$ by performing gradient descent on the robot SDF, starting from configurations $\bm q$ sampled from a configuration set $\mathcal{Q} \subset \mathbb{R}^{n}$.
    
    \item \textbf{Dataset Preparation.} For each newly sampled workspace-configuration pair $(\bm p, \bm q)$, we compute the ground-truth GCDF value according to Definition~\ref{def:cdf_generalized} by finding the minimum distance from $\bm q$ to the approximated zero level set $\mathcal{Z}(\bm p)$.
    
    \item \textbf{Neural Network Training.} We design the network architecture and loss functions, then train the neural implicit function to regress GCDF values across the configuration space.
\end{enumerate}
However, extending this pipeline to mobile manipulators introduces several critical challenges that must be addressed. 
The first challenge stems from the exponential growth of zero-level sets. Unlike fixed-base manipulators with bounded workspaces, mobile manipulators operate in unbounded translational spaces. For a given workspace point $\bm p$, the corresponding zero-level set $\mathcal{Z}(\bm p)$ contains exponentially more configurations due to the infinite reachability from different base positions. This leads to two major issues: (i) the computational cost of approximating $\mathcal{Z}(\bm p)$ can take days to complete, and (ii) gradient descent from slightly different base positions often converges to nearly identical arm configurations, resulting in massive redundancy. Consequently, the standard approach of sampling a fixed number of configurations becomes insufficient to construct informative zero-level sets, leading to significant bias when querying GCDF values during training.

The second challenge involves balancing model complexity and efficiency. To ensure compatibility with downstream numerical optimization, we must carefully balance inference efficiency against model accuracy. This requires joint consideration of network architecture complexity and data density. Indeed, overly complex networks or dense sampling dramatically increase training costs, while insufficient capacity or sparse data compromise the smoothness and accuracy of the learned GCDF, degrading optimization performance.

The third challenge concerns loss function design. The standard loss functions designed for fixed-base manipulators fail to capture the unique properties of mobile manipulator GCDFs. We require modified loss formulations that account for the increased dimensionality and ensure the learned distance field exhibits the smoothness and gradient consistency necessary for effective collision avoidance optimization.
We now present our solutions to these challenges.
\begin{figure}[t]
\setlength{\belowcaptionskip}{-10pt}
\centering
\includegraphics[width=0.5\textwidth, trim={0cm 0cm 0cm 0cm}, clip]{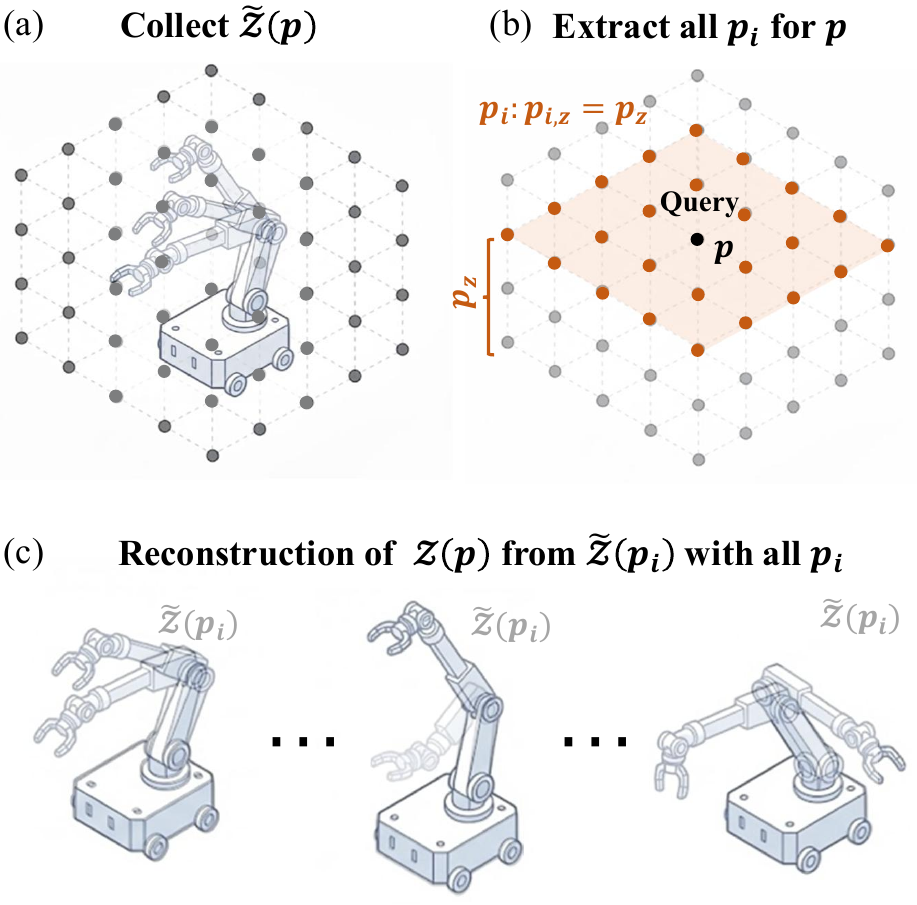}
\caption{Construction of zero-level set for mobile manipulators. 
(a) For each workspace point $\bm p$ on the grid centered at the robot origin, we construct a representative subset $\tilde{\mathcal{Z}}(\bm p)\subset\mathcal{Z}(\bm p)$ by fixing the base translation and solving for contact configurations over the arm joints and base rotations. 
(b) For a new query point $\bm p$ with height $p_z$, we gather all grid points $\bm p_i$ on the same horizontal slice ($\bm p_{i,z}=\bm p_z$) in a neighborhood around $\bm p$.
(c) We reconstruct the full zero-level set $\mathcal{Z}(\bm p)$ by taking the union of the precomputed subsets on that slice and compensating for their horizontal offsets, effectively translating the base while reusing the same arm contact configurations.}
\label{fig:neural-gcdf-training}
\end{figure}

\textbf{Data Preparation. }
Our data preparation process consists of two phases: (1) offline computation of zero-level sets, and (2) online sampling of representative workspace-configuration pairs with their corresponding GCDF values.
As we noted earlier, when constructing the zero-level sets of $\bm p$, considering all possible base positions $(q^t_x, q^t_y)$ would lead to data explosion and information redundancy, as the reachable workspace becomes unbounded when the base can translate freely. To mitigate this issue, we fix the base position at the origin ($q^t_x = q^t_y = 0$) during the offline phase, computing only a representative subset of the true zero-level set:
\begin{equation}
\tilde{\mathcal{Z}}(\bm p) = \{\bm q \in \mathcal{Z}(\bm p) \mid q^t_x = 0, q^t_y = 0\}
\label{eq:zero_level_subset}
\end{equation}
Specifically, as shown in Fig.~\ref{fig:neural-gcdf-training}(a), we first discretize the workspace around the robot center into a $T \times T \times T$ volumetric grid. For each grid point $\bm p$, we compute a dense subset $\tilde{\mathcal{Z}}(\bm p)$ efficiently by concurrently solving the following unconstrained optimization problem from a batch of randomly sampled initial arm configurations of $N$ initial arm and base rotational configurations using a quasi-Newton method~\cite{nocedal1999numerical}:
\begin{equation}
\min_{\bm q}  f_s^2(\bm p, \bm q), 
\label{eq:zero_level_opt}
\end{equation}
Note that we freeze the translational elements in $\bm q$ at the origin during optimization, focusing solely on finding optimal arm and base rotations. This subset $\tilde{\mathcal{Z}}(\bm p)$ addresses the computational challenge and avoids the data explosion problem, but it inevitably loses information about base mobility.

To recover the missing information during the online phase, we leverage a key insight about mobile manipulators: \textit{the arm’s relative reachability with respect to obstacles is translation-equivariant in the horizontal plane.}. In other words, if a manipulator can reach a point $\bm p$ from base position $(0, 0)$ with joint configuration $\bm q^r$, then it can reach point $\bm p'$ from base position $(\Delta x, \Delta y)$ with the same joint configuration $\bm q^r$, where $\bm p' = \bm p + [\Delta x, \Delta y, 0]^\top$.

Based on this observation, for a new query point $\bm p = [p_x, p_y, p_z]^\top$, we can reconstruct its complete zero-level set $\mathcal{Z}(\bm p)$ by aggregating the precomputed subsets from all grid points $\bm p_i$ centered around $\bm p$ with the same height, as illustrated in Fig.~\ref{fig:neural-gcdf-training}(b)-Fig.~\ref{fig:neural-gcdf-training}(c), while compensating for the horizontal displacement:
\begin{equation}
\mathcal{Z}(\bm p) = \bigcup_{i: \bm p_{i,z} = \bm p_z} \left\lbrace [\bm p_{i,x}, \bm p_{i,y}, {\bm q^r}^\top]^\top \mid \bm q^r \in \tilde{\mathcal{Z}}(\bm p_i) \right\rbrace.
\label{eq:zero_level_reconstruct}
\end{equation}
Intuitively, this is equivalent to virtually translating the robot base to position $\bm p_i$, treating it as the new origin, and then combining the configurations from $\tilde{\mathcal{Z}}(\bm p_i)$ to reach the target point.  This concatenation of zero-level sets from the $T \times T$ grid points $\bm p_i$, with appropriate base position adjustments, recovers the rich information about base mobility that was sacrificed in the offline phase.

For each workspace-configuration pair $(\bm p, \bm q)$, the ground truth GCDF value is computed as:
\begin{equation}
f^g_c(\bm p, \bm q) = \text{sgn}(f_s)  \min_{\bm q_i \in \mathcal{Z}(\bm p)} \|\bm q - \bm q_i\|_{\bm M},
\label{eq:cdf-weighted}
\end{equation}
where $\bm M$ is the weight matrices defined in Definition~\ref{def:cdf_generalized}, and $\text{sgn}(f_s)$ indicates the collision status determined by the sign of the SDF value $f_s(\bm p, \bm q)$. This formulation accounts for both the joint-space distance and the horizontal displacement cost, providing a comprehensive measure of configuration proximity. 

\textbf{Network Architecture and Loss Design. }
The Neural GCDF employs a 7-layer MLP architecture. The MLP is trained using the concatenation of $\bm{p}$ and $\bm{q}$ as input, we randomly sample $b_1$ grid points in the workspace and $b_2$ configurations for each point, resulting in an input tensor of size $\mathbb{R}^{(b_1 b_2) \times (3+n)}$. The network outputs the corresponding GCDF value for each input pair, resulting in an output of size $\mathbb{R}^{b_1 \times b_2}$.
\begin{table}[t]
\caption{\textbf{Loss function design for training neural GCDF.}}
\centering
\begin{tabular}{@{}l >{\centering\arraybackslash}p{6.5cm} p{0cm}@{}}
\toprule
\textbf{Loss } & \textbf{Definition} &  \\
\midrule
\( \mathcal{L}_{\text{dist}} \)  & 
\(
\displaystyle
\frac{1}{b_1 b_2} \sum_{i=1}^{b_1} \sum_{j=1}^{b_2} \left( \hat{f}_c^g(p_i, q_j) - f_c^g(p_i, q_j) \right)^2
\)
& \\
\addlinespace
\( \mathcal{L}_{\text{grad}} \)   & 
\(
\displaystyle
\frac{1}{b_1 b_2} \sum_{i=1}^{b_1} \sum_{j=1}^{b_2} 
\left( 1 - 
\frac{
    \nabla_{\bm{q}} \hat{f}_c^g(\bm{p}_i, \bm{q}_j)^T \cdot \nabla_{\bm{q}} f_c^g(\bm{p}_i, \bm{q}_j)
}{
    \| \nabla_{\bm{q}} \hat{f}_c^g(\bm{p}_i, \bm{q}_j) \|
    \| \nabla_{\bm{q}} f_c^g(\bm{p}_i, \bm{q}_j) \|
} \right)^2
\)
& \\
\addlinespace
\( \mathcal{L}_{\text{eikonal}} \)   & 
\(
\displaystyle
\frac{1}{b_1 b_2} \sum_{i=1}^{b_1} \sum_{j=1}^{b_2} \left( \| \nabla_{\bm q} f^g_c(\bm p, \bm q)\|_{{\bm M}^{-1}} - 1 \right)^2
\)
&  \\
\addlinespace
\( \mathcal{L}_{\text{tension}} \) & 
\(
\displaystyle
\frac{1}{b_1 b_2} \sum_{i=1}^{b_1} \sum_{j=1}^{b_2} \| \nabla_q \hat{f}_c^g(p_i, q_j) \|^2
\)
&  \\
\bottomrule
\end{tabular}
\label{tab:loss function}
\end{table}
The design of our neural GCDF loss function is inspired by the original CDF framework~\cite{li2024RSS-CDF} and tailored to the theoretical properties of GCDF established in Theorem~\ref{thm:gcdf_properties}. Specifically, we incorporate the distance loss $\mathcal{L}_{\text{dist}}$ for distribution fitting, the gradient loss $\mathcal{L}_{\text{grad}}$ for gradient direction regularization, the tension loss $\mathcal{L}_{\text{tension}}$ for smoothness enhancement, and the eikonal loss $\mathcal{L}_{\text{eikonal}}$ to enforce the weighted eikonal property. These loss terms collectively ensure that the trained implicit neural GCDF possesses valid field values and gradient magnitudes/directions that conform to the theoretical guarantees in Theorem~\ref{thm:gcdf_properties}, enabling nearest configurations on the zero-level set to be reached via single-step gradient projection.

Detailed definitions of these loss components are provided in Table~\ref{tab:loss function}, with the variables denoted by a hat ( $\hat{}$ ) representing the ground truth values. The total loss is formulated as a weighted sum of the four components:
\begin{equation}
\mathcal{L}_{\text{total}} = 
\lambda_1 \mathcal{L}_{\text{dist}} +
\lambda_2 \mathcal{L}_{\text{grad}} +
\lambda_3 \mathcal{L}_{\text{eikonal}} +
\lambda_4 \mathcal{L}_{\text{tension}}.
\label{eq:total_lo}
\end{equation}

\section{Fast and Safe Trajectory Optimization}
\label{sec:method:trajectory-optimization}
Building upon the neural GCDF representation trained offline (Section~\ref{sec:method:cdf-mobile-manipulator}), which provides dense value and gradient information of GCDF through a PyTorch model, this section presents our trajectory optimization framework for mobile manipulators. We describe: (1) the collision-free trajectory optimization formulation and GCDF-based constraint handling to ensure gradient and value validity; (2) the numerical algorithm for solving the nonlinear program and our efficient implementation that bridges the PyTorch model with a C++ solver by exploiting parallelization and constraint sparsity; and (3) a perception-integrated navigation framework that enables continuous, safe, and fast online replanning for long-range navigation in unknown environments.
\subsection{Collision-Free Trajectory Optimization for Mobile Manipulator}
We now formalize the trajectory optimization problem for a mobile manipulator system composed of a mobile base and an articulated robot arm.

\textbf{Problem Formulation.} Denote the optimization horizon as $N$, and $\bm{q}_i \in \Real{n}$ the configuration at the $i$th step for $\forall i = 1, 2, \dots, N$. By separating the configuration variables into base translational and rotational DoFs, and incorporating velocity control variables, we define the full state variable at time step $i$ as:
\begin{equation}
        \bm x_i = \underbrace{[\boldsymbol{q}_{i}^{t}, \boldsymbol{q}_{i}^{r},}_{\textup{configuration}}\underbrace{\boldsymbol{v}_{i}^{t}, \boldsymbol{v}_{i}^{r}]}_{\textup{velocity}}\in \Real{2n}.
\end{equation}
The translational components (with superscript $t$) describe the mobile base's planar motion in the $x$-$y$ plane, while the rotational components (with superscript $r$) correspond to the base orientation and manipulator joint angles. Let us further denote the first elements of the rotational joint angles and velocities, $q^r_{i,0}$ and $v_{i,0}^r$, as the base rotation angle and angular velocity, respectively. By collecting the state variables across all time steps into $\calX = \{\bm{x}_1, \bm{x}_2,\dots, \bm{x}_N\}$, the optimization problem takes the following form:
\begin{subequations}\label{eq:optimization-in-general}
\begin{align}
    \displaystyle  \operatorname*{min}_{\mathcal{X}\in\Real{2nN}}\quad & J_c + J_t \label{eq:optimization-in-general:cost}\\ 
    \operatorname*{subject\ to}\quad \,\, & \boldsymbol{q}_{i+1,x}^t=\boldsymbol{q}_{i,x}^t + (v_{i,x}^t  \cos q_{i,0}^r - v_{i,y}^t \sin q_{i,0}^r)dt,\nonumber\\
    & \boldsymbol{q}_{i+1,y}^t=\boldsymbol{q}_{i,y}^t + (v_{i,x}^t  \sin q_{i,0}^r + v_{i,y}^t \cos q_{i,0}^r)dt,\nonumber\\ 
    & \boldsymbol{q}^r_{i+1}=\boldsymbol{q}^r_{i} + \bm v_i^rdt,\nonumber\\
     &\quad\quad \forall i = 1,2,\ldots,N-1\label{eq:optimization-in-general:kinematics}\\
    & \bm h(\boldsymbol{x}_{i}) = \zero,\nonumber\\ &\bm g(\boldsymbol{x}_{i}) \geq \zero,\nonumber\\
    &\quad\quad \forall i = 1,2,\ldots,N\label{eq:optimization-in-general:in-eq-constraints}\\
    & f^g_c(\boldsymbol{p}_{j},\bm q_{i}) - \delta \geq 0,\nonumber\\
    &\quad\quad \forall i = 1,2,\ldots,N, \forall j = 1,2,\ldots, M \label{eq:optimization-in-general:cdf-constraint}\\
    &\boldsymbol{x}_0 = \boldsymbol{x}_{s}\label{eq:optimization-in-general:initial-constraint}.
\end{align}
\end{subequations}

The objective function (\ref{eq:optimization-in-general:cost}) consists of two terms: $J_c$ represents the control cost and $J_t$ denotes the tracking error for desired states, both formulated as standard quadratic costs in our case. The velocity-controlled system is governed by the kinematic constraints~(\ref{eq:optimization-in-general:kinematics}): The first two equations describe the translational motion of the mobile base, while the last one updates all rotational joints, including both the base orientation and manipulator joint angles. The general equality and inequality constraints are organized as $h$ and $g$ in (\ref{eq:optimization-in-general:in-eq-constraints}), where the comparisons are element-wise. Importantly, safety constraints are enforced through (\ref{eq:optimization-in-general:cdf-constraint}), which generates $N \times M$ constraints ensuring that the GCDF value between the robot and each of the $M$ obstacle points remains above a safety threshold $\delta$ throughout the $N$-step optimization horizon. The initial condition is specified by (\ref{eq:optimization-in-general:initial-constraint}).  
\begin{remark}
In practice, the safety threshold $\delta$ must be chosen to balance neural approximation error and avoidance conservativeness. If $\delta$ is too small, it may not compensate for the approximation error of the neural implicit function, potentially violating collision avoidance. If $\delta$ is too large, the resulting behavior becomes overly conservative and may render the problem infeasible in narrow environments.
\end{remark}

\textbf{Algorithm. }To solve the nonlinear programming problem~\eqref{eq:optimization-in-general}, we adopt the sequential convex optimization algorithm, which iteratively approximates a local model (first-order constraint information and second-order objective information) around the current iterate $\mathcal{X}_k$ and solves a convex subproblem at each step. We chose this algorithm for the following reasons. First, as discussed in Section~\ref{sec:method:cdf-mobile-manipulator}, the trained neural GCDF directly exploits distance information in configuration space and explicitly maintains accurate gradient information that ensures the quality of local linearization. This accurate local model enables larger effective step sizes within the trust region or longer line search steps at each iteration, leading to faster convergence. Second, considering the large number of GCDF constraints between the robot and each obstacle point in~\eqref{eq:optimization-in-general}, sequential convex optimization offers computational efficiency through well-established large-scale sparsity-aware QP subproblem solvers. This makes our approach particularly suitable for fast trajectory planning in obstacle-dense environments where the number of constraints can scale significantly.

Specifically, at the $k$-th iteration, we approximate a local model around the current iterate $\mathcal{X}_k$ and solve the following $\ell_1$ penalty with $\ell_{\infty}$ trust region convex subproblem, following the formulation in \crisp~\cite{Li2025RSS-crisp}:
\begin{subequations} \label{eq:subproblem-smooth}
    \begin{align}
        \min_{p_k, v, w, t} & \displaystyle \,\,\,J_k + \nabla J_k\tran p_k  + \frac{1}{2} p_k\tran \nabla_{xx}^2 J_k p_k \nonumber \\
        & + \mu \sum_{i \in \mathcal{E}} (v_i + w_i) + \mu \sum_{i \in \mathcal{I}} t_i  \\
        \text{subject to} &\,\, \nabla c_i(x_k)\tran p_k + c_i(x_k) = v_i - w_i, \ i \in \mathcal{E} \label{eq:subproblem-smooth-eq}\\
        & \,\, \nabla c_i(x_k)\tran p_k + c_i(x_k) \ge -t_i, \ i \in \mathcal{I} \label{eq:subproblem-smooth-ineq}\\
        & \,\, v, w, t \ge 0 \\
        & \,\, \|p_k\|_\infty \le \Delta_k. \label{eq:subproblem-trust-region}
    \end{align}
\end{subequations}
Here, we denote the sum of $J_c$ and $J_t$ in~\eqref{eq:optimization-in-general} as $J$, and stack all the equality and inequality constraints in $c$ with the corresponding index sets $\mathcal{E}$ and $\mathcal{I}$.
$J_k$, $\nabla J_k$, and $\nabla_{xx}^2 J_k$ represent the objective function's value, gradient, and Hessian at the current iterate $\mathcal{X}_k$, respectively. The nonnegative slack variables $v,w,t$ are introduced to penalize constraint violations. The constraints $c_i$ are linearized around $\mathcal{X}_k$, and $p_k$ is the current trial step subject to the trust region constraint in~\eqref{eq:subproblem-trust-region} to find an optimal step within a local range. To obtain accurate GCDF queries aligned with our training distribution, we transform obstacle points into the robot's local base frame at each time step. Specifically, we treat the robot base pose as the origin and bias all obstacle points $\bm{p}$ into this coordinate frame before querying the neural GCDF. This ensures queries remain within the coverage of our training data, providing reliable distance estimates and gradients for optimization.

\begin{remark}
    Advanced solvers for such large-scale optimization problems have increasingly exploited problem-specific sparsity structures to improve computational efficiency. For instance, the chain-like sparsity in the system dynamics~\cite{kang2024fast}. These solvers typically rely on one of two approaches: either using automatic differentiation tools (e.g., CppAD) to compute sparse Jacobians of explicitly defined functions~\cite{Li2025RSS-crisp,OCS2,carpentier2019pinocchio}, or requiring manual specification of sparsity patterns for constraint mapping~\cite{drake}. However, our GCDF-based collision avoidance constraints are defined implicitly through neural networks, making direct application of these approaches challenging. Moreover, the sparsity pattern of GCDF constraints varies with different obstacle configurations and robot query point distributions, requiring a flexible representation that can be efficiently reconstructed online.
\end{remark}
In this work, we choose \crisp~\cite{Li2025RSS-crisp} as our backbone solver for its computational efficiency with CppAD-based automatic differentiation, capability to generate informed trajectories from naive initial guesses, and trust-region framework that naturally leverages GCDF's accurate gradient information.
To integrate our implicit neural collision constraints with \crisp, we extend the solver with the following key modifications. First, we enable GPU-parallel querying of implicit function values and gradients during the solving process. The gradients from parallel GCDF queries form a dense matrix in $\mathbb{R}^{NM \times n}$. Second, we design a compact representation for the structure of GCDF constraints that enables online reconstruction of constraint mappings and automatic derivation of their sparsity patterns. This representation captures the relationship between robot query points, obstacle points, and configuration variables, allowing efficient adaptation to varying obstacle configurations. Third, we implement efficient memory-level sparse mapping from the dense GCDF gradient matrix into the overall problem's sparse Jacobian matrix. This process, combined with GPU-parallel querying, significantly improves computational efficiency.

\textbf{Implementation. }
To enable cross-platform GPU querying of the trained neural network, we first convert the PyTorch GCDF model into a batch-queryable CasADi~\cite{andersson19mpc-casadi} function using the L4CasADi library~\cite{salzmann2024l4casadi}, which automatically computes gradients through CasADi's symbolic differentiation. We then generate C++ code from this CasADi function and compile it into a dynamic library. Leveraging CasADi's cross-platform compatibility, our solver dynamically loads this library during the online phase, providing efficient inference functions for both GCDF values and gradients that can be seamlessly integrated into the optimization loop. The implementation of this pipeline involves numerous GPU-related and environment-specific engineering challenges. Detailed instructions and source code are provided on our project website.
\begin{remark}
In obstacle-dense environments, the GCDF collision avoidance constraints constitute the largest component of the optimization problem~(\ref{eq:optimization-in-general}). Beyond enabling the solver to efficiently obtain constraint values and Jacobians through GPU batch queries, it is crucial to exploit the sparsity structure of these constraints.    
\end{remark}
 For the GCDF constraints in~(\ref{eq:optimization-in-general:cdf-constraint}), where each obstacle point is considered for all time steps, this would result in $N \times M$ constraints, which is a prohibitively large number. For mobile manipulation tasks, since the workspace is relatively larger due to base movements, it is clearly unnecessary to consider all possible collision pairs. We apply a straightforward approach to partition obstacle points based on the reference base positions, only considering obstacle points within a certain range of each reference base position. To this end, we design a straightforward data structure consisting of two components: one storing the spatial coordinates of all obstacle points, and another storing the indices of obstacle points to be considered at each time step. This structure enables efficient retrieval of the number of constraints and the sparsity pattern of the obstacle constraints. Moreover, both components can be modified online to accommodate changes in obstacle layouts, allowing for dynamic updates to the sparsity pattern and constraint count during online replanning. Let us now mathematically formulate this partitioning scheme.
\begin{figure}[t]
\setlength{\belowcaptionskip}{-10pt} 
\centering
\includegraphics[width=0.5\textwidth, trim={0cm 0cm 0cm 0cm}, clip]{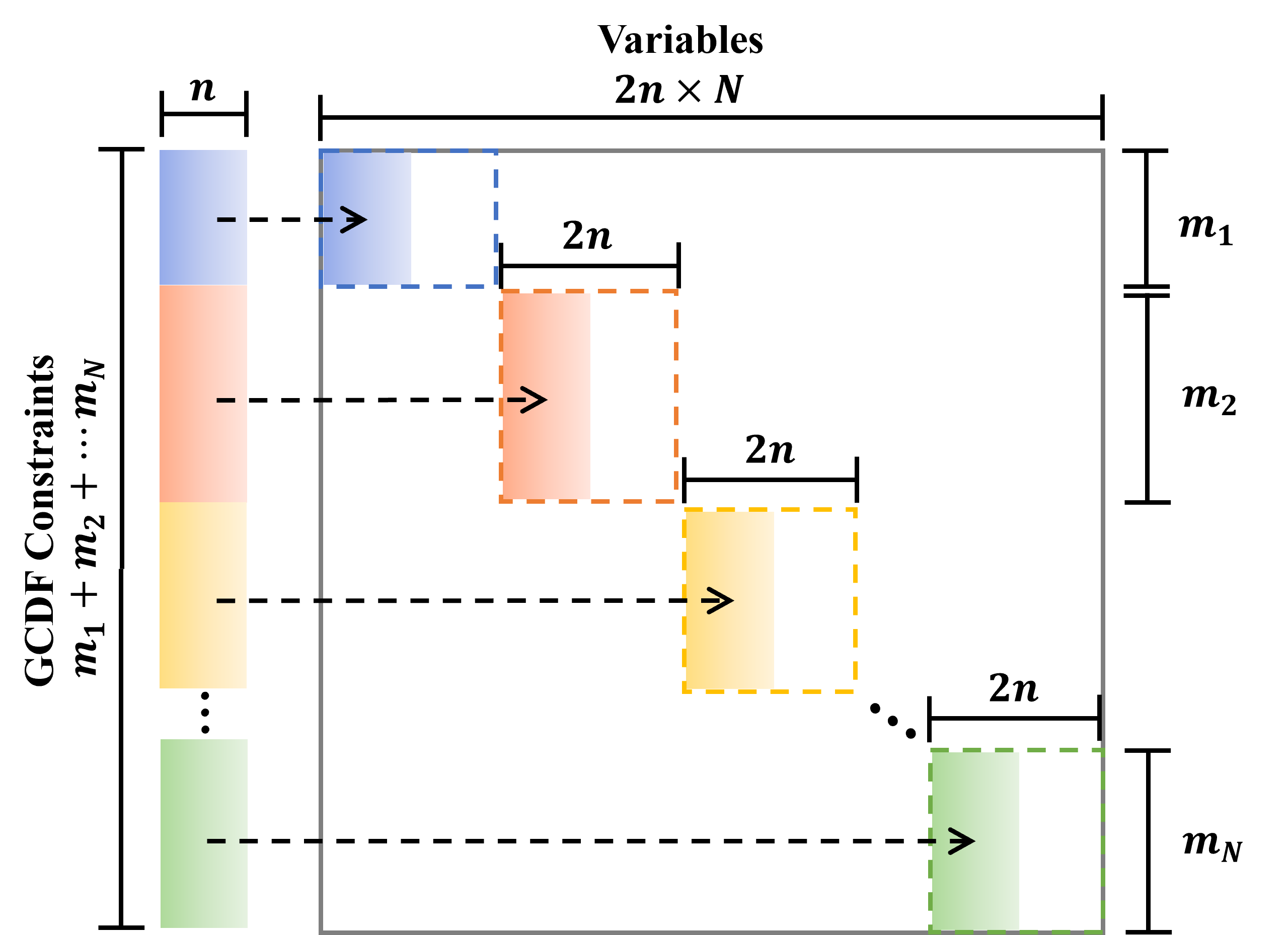}
\caption{Visualization of the Jacobian sparsity pattern induced by GCDF constraints, together with the dense-to-sparse mapping that assembles the full GCDF gradient vector queried from the implicit neural model into the sparse Jacobian.}
\label{fig:sparsity-pattern}
\end{figure}

Suppose the obstacle set $\mathbb{I}_{M}$ contains $M$ points. We divide it into $N$ partitions, denoted by $\mathbb{I}_{M,i}$ for $i \in 1,2,\ldots N$, representing potential collision points at each step. This gives us:
\begin{equation}
    \mathbb{I}_M \subseteq \mathbb{I}_{M,1} \cup \mathbb{I}_{M,2} \cup \ldots \cup \mathbb{I}_{M,N}, 
\end{equation}
where these sets typically have overlaps, with the $i$th set containing $m_i$ points. Let us analyze all GCDF constraints and its Jacobian matrix in vectorized form. The GCDF constraints can be organized in $\bm c_{\text{gcdf}}$ as :
\begin{equation}
\label{eq:cdf_constraints_vector}
    \bm c_{\text{gcdf}} = 
\begin{bmatrix}
    f^g_c(\bm p_{1,1}, \bm q_1) \\
    \vdots \\
    f^g_c(\bm p_{1,m_1}, \bm q_1) \\
    \vdots \\
    f^g_c(\bm p_{i,1}, \bm q_i) \\
    \vdots \\
    f^g_c(\bm p_{i,m_i}, \bm q_i) \\
    \vdots \\
    f^g_c(\bm p_{N,1}, \bm q_N) \\
    \vdots \\
    f^g_c(\bm p_{N,m_N}, \bm q_N)
\end{bmatrix} = 
\begin{bmatrix}
\bm c_{\bm q_1} \\\vdots \\\bm c_{\bm q_i} \\ \vdots\\\bm c_{\bm q_N} \\    
\end{bmatrix} \in \Real{m_1 + m_2 + \ldots + m_N}.
\end{equation}
For simplicity, we omit the constant term $\delta$. The constraints are indexed by time step, where obstacle points $\bm p_{i,1}$ to $\bm p_{i,m_i}$ belonging to $I_{M,i}$ are considered for $\bm q_i$, with related constraints stacked in $\bm c_{\bm q_i}$ as shown in the second equality. While $\bm c_{\text{gcdf}}$ is a dense vector that can be efficiently queried using batch operations, its Jacobian matrix exhibits high sparsity due to the partition. This sparsity pattern must be carefully exploited in the optimization process to ensure efficient solving of the problem (\ref{eq:optimization-in-general}).

Specifically, the Jacobian matrix $\nabla^\top_{\bm q} \bm c_{\text{gcdf}}$ is sparse with dimension $(m_1 + m_2 + \ldots + m_N) \times 2Nn$, and its sparsity pattern is visualized in Fig.~\ref{fig:sparsity-pattern}. While the intermediate gradient matrix obtained directly from the implicit neural CDF, denoted as $\nabla^\top_{\bm q} \Tilde{\bm c}$, has dimension $(m_1 + m_2 + \ldots + m_N) \times n$. We need to project values from $\nabla^\top_{\bm q} \Tilde{\bm c}$ into their proper positions in the sparse matrix $\nabla^\top_{\bm q} \bm c_{\text{gcdf}}$. This process can be formalized as:
\begin{equation}
\nabla^\top_{\bm q} \Tilde{\bm c} = 
\begin{bmatrix}
\nabla^\top_{\bm q_1}\bm c_{\bm q_1} \\\vdots \\\nabla^\top_{\bm q_i} \bm c_{\bm q_i} \\ \vdots\\\nabla^\top_{\bm q_N} \bm c_{\bm q_N} \\    
\end{bmatrix} \in \Real{(m_1 + m_2 +\ldots + m_N) \times n},
\end{equation}
where
\begin{equation}
\nabla^\top_{\bm q_i} \bm c_{\bm q_i} \in \Real{m_i \times n}. \quad\forall i = 1,2,\ldots N
\end{equation}
We now show that this mapping can be treated as dense-sparse matrix multiplication, we construct a sequence of $N$ sparse projection matrices $\bm P = \{\bm P_1, \bm P_2,\ldots \bm P_N\}$. The projection can be achieved through block operations:
\begin{equation}
\label{eq:cdf_jacobian_construction}
\nabla^\top_{\bm q} \bm c = \begin{bmatrix}
\nabla^\top_{\bm q_1} \bm c_{\bm q_1} \bm P_1 \\\vdots \\\nabla^\top_{\bm q_i} \bm c_{\bm q_i} \bm P_i \\ \vdots\\\nabla^\top_{\bm q_N} \bm c_{\bm q_N} \bm P_N \\    
\end{bmatrix} \in \Real{(m_1 + m_2 +\ldots + m_N) \times 2Nn},
\end{equation}
where the projection matrices $\bm P_i \in \Real{n\times2Nn}$ are defined as:
\begin{equation}
(\bm P_i)_{r,c} = 
\begin{cases} 
1, & \text{if } r = c - 2Nn(i-1)\\
0, & \text{otherwise}
\end{cases}
\end{equation}
As shown in Fig.~\ref{fig:sparsity-pattern}, in essence, $\bm P_i$ is constructed by placing an identity matrix $I_n$ at row 1, column $2Nn(i-1) + 1$, effectively positioning the $n$ elements of each row of $\nabla^\top_{\bm q_i} \bm c_{\bm q_i}$ to their proper positions.
\begin{remark}
     It is worth noting that the focus is on utilizing the sparsity pattern to develop high-performance tailored numerical methods, rather than specific implementation details. For instance, while MATLAB would benefit from the dense-sparse matrix multiplications shown in (\ref{eq:cdf_jacobian_construction}), our C++ implementations use direct memory operations to position values correctly.
\end{remark}

\section{Experiments}
\label{sec:exp}
In this section, we design experiments to comprehensively evaluate our framework from three complementary perspectives. First, we validate the theoretical properties of GCDF and the effectiveness of our training pipeline by assessing the learned neural GCDF. In particular, we examine whether the properties in Theorem~\ref{thm:gcdf_properties} hold in practice and whether the neural GCDF provides sufficiently accurate values and gradients to support the single-step projection, which maps a configuration onto the queried obstacle point's zero-level set using first-order information. Second, we benchmark our GCDF-constrained trajectory optimization algorithm against the state-of-the-art (SOTA) pipelines in multiple randomized maps to demonstrate efficiency, robustness, and solution quality in obstacle-dense environments. Third, we deploy the full system on a real mobile manipulator to verify practicality in real-world scenes.
\begin{figure}[t]
\setlength{\belowcaptionskip}{-10pt} 
\centering 
\includegraphics[width=0.5\textwidth, trim={0cm 0cm 0cm 0cm}, clip]{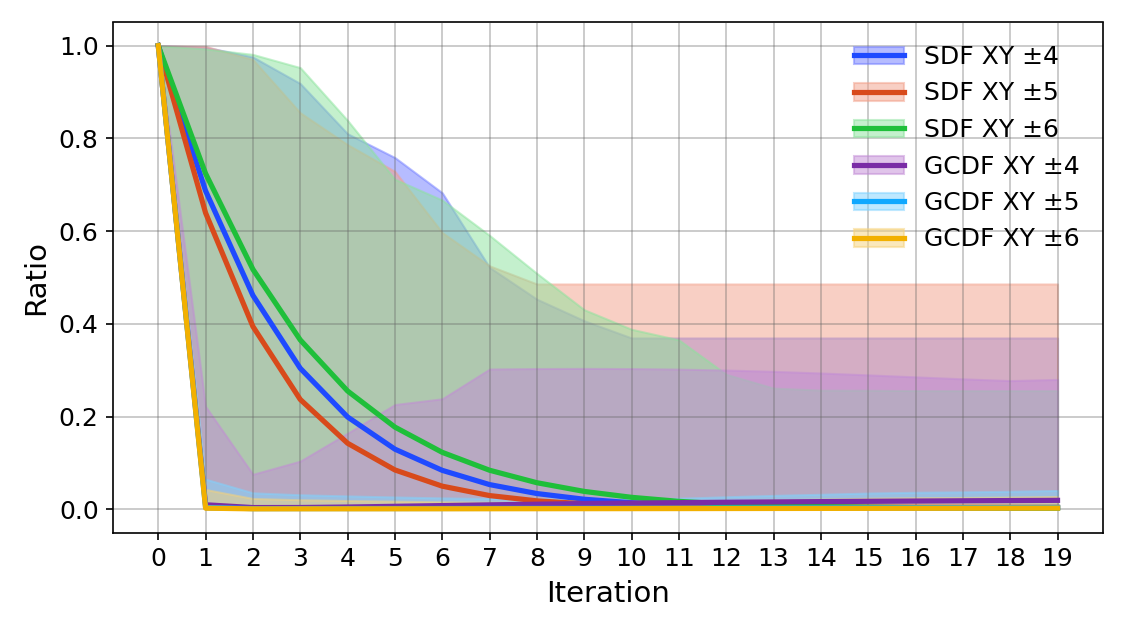}
\caption{Residual reduction comparison between our GCDF-based analytical projection and an iterative workspace SDF baseline. Targets are sampled within three $x$--$y$ ranges ($\pm4\,\mathrm{m}$, $\pm5\,\mathrm{m}$, $\pm6\,\mathrm{m}$), and results aggregate 128 targets with 128 random arm initializations per target. Shaded regions indicate variability across trials.}

\label{fig:projection_error}
\end{figure}
\begin{figure}[!t]
\setlength{\belowcaptionskip}{-10pt} 
\centering
\includegraphics[width=0.5\textwidth, trim={0cm 1cm 15cm 0cm}, clip]{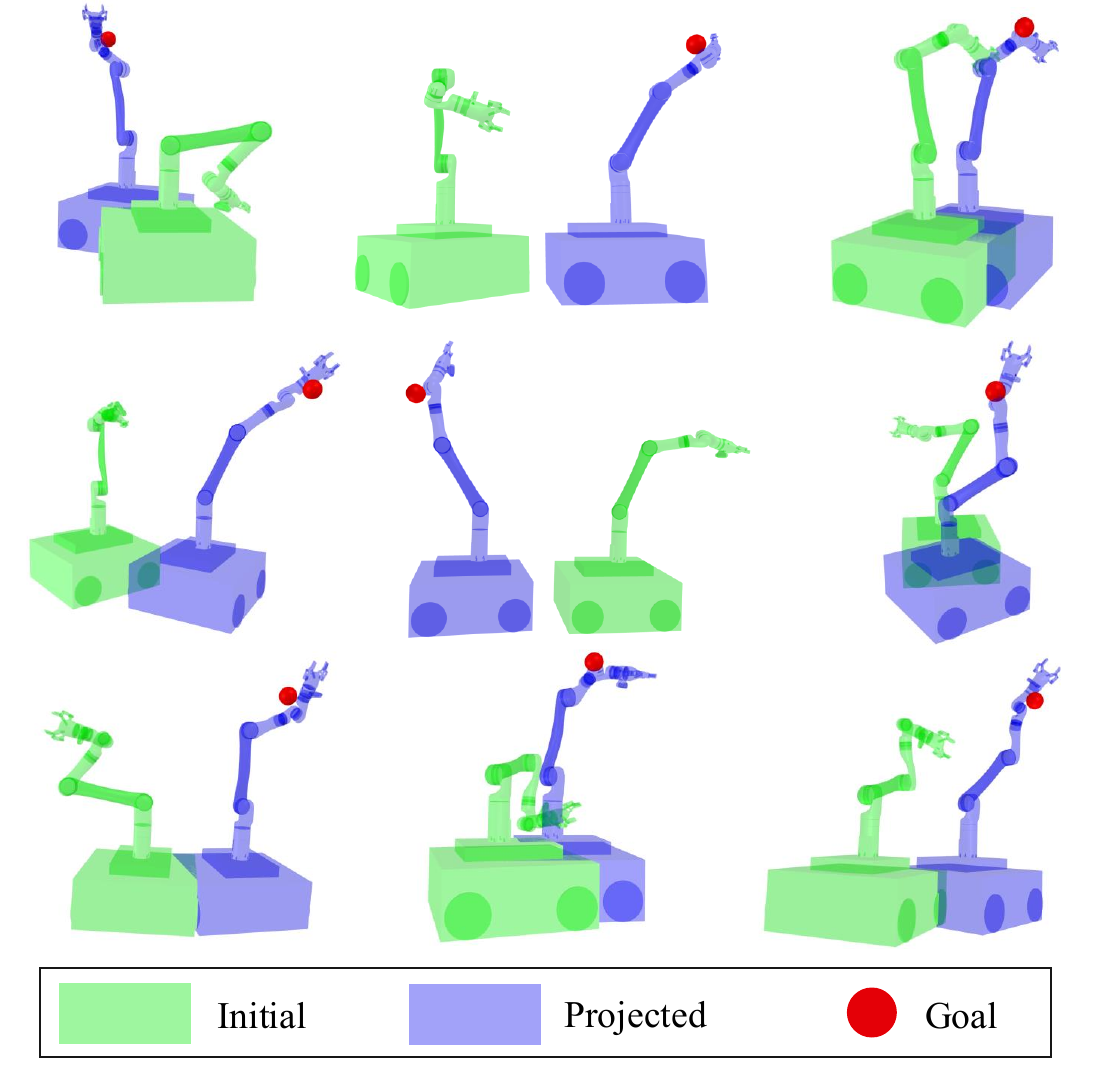}
\caption{Visualization of the analytical single-step projection using the trained neural GCDF. The red dot denotes the queried target point. The initial configuration and the projected configuration after one step are overlaid for comparison. The result highlights the accuracy of the learned GCDF (values/gradients) and shows that the proposed projection moves directly in configuration space toward the target point's nearest zero-level set.}
\label{fig:projection}
\end{figure}
\begin{figure*}[!h]
\setlength{\belowcaptionskip}{-10pt} 
\centering
\includegraphics[width=\textwidth, trim={0cm 0cm 0cm 0cm}, clip]{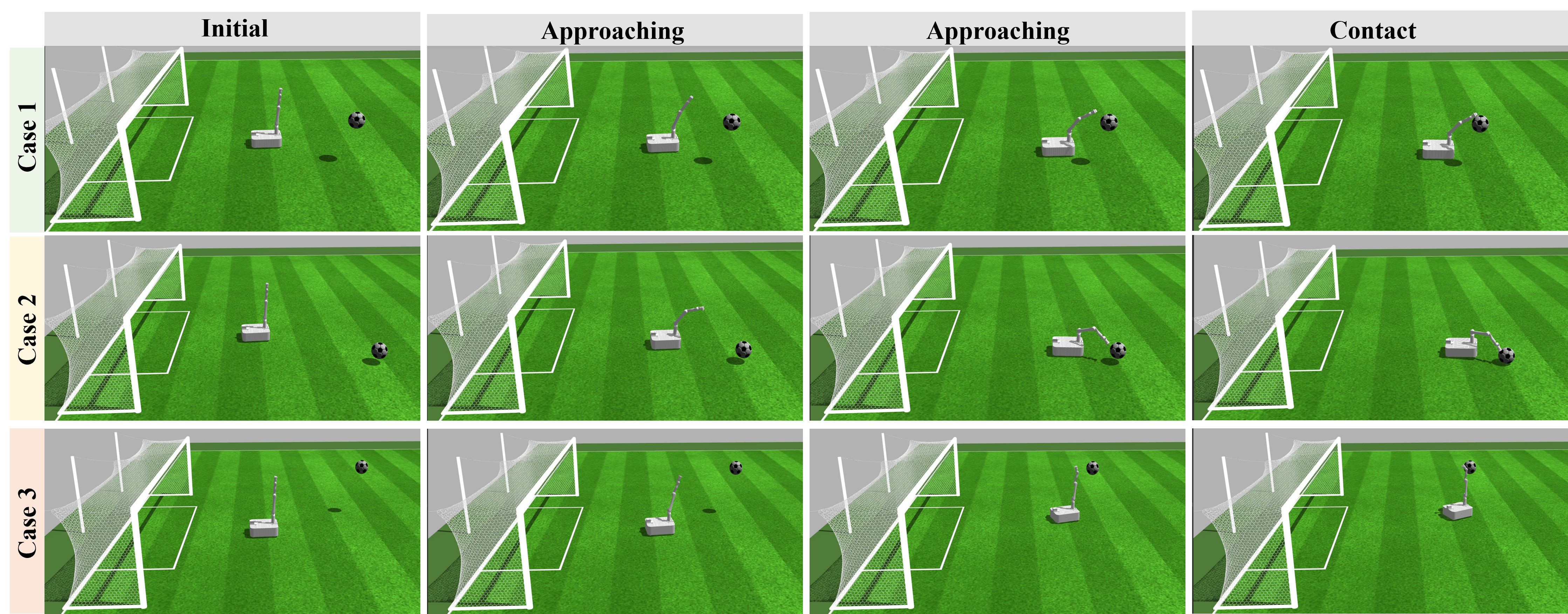}
\caption{Visualization of the goalkeeper example with online GCDF-based control. Each row shows a different shot with a distinct initial ball state; frames progress left-to-right in time. At each time step, the controller evaluates the value and gradient of the GCDF online and applies the corresponding projection to generate real-time, collision-aware motions that intercept the incoming ball.}
\label{fig:goal_keeper}
\end{figure*}
\subsection{Simulation Experiments}
\subsubsection{Training Results}
We begin by reporting the training settings and outcomes for GCDF. We set the loss weights to $\lambda_1 = 5.0$, $\lambda_2 = 0.1$, $\lambda_3 = 0.01$, and $\lambda_4 = 0.01$, and optimize the network by minimizing $\mathcal{L}_{\text{total}}$ in (\ref{eq:total_lo}) to achieve accurate GCDF approximation while enforcing the desired regularization.
For each training iteration, we sample $b_1=20$ environment points from the grid and, for each point, draw $b_2=100$ configurations, resulting in a batch size of $20\times100$. We train for 14{,}900 epochs using Adam with an initial learning rate of 0.005, decayed by a factor of 0.5. Training takes approximately 2 hours on four NVIDIA RTX 3090 GPUs. The final loss terms are $\mathcal{L}_{\text{dist}}=0.274$, $\mathcal{L}_{\text{grad}}=0.100$, $\mathcal{L}_{\text{eikonal}}=0.228$, and $\mathcal{L}_{\text{tension}}=18.571$. Unless otherwise specified, we use this model throughout the following experiments. The training code and pretrained weights are available on the project website.
\subsubsection{Properties of Neural GCDF}
Next, we validate the Euclidean projection property and evaluate the accuracy of the value and gradient information provided by the learned implicit GCDF. Preserving this property with accurate first-order information is a core theoretical pillar of our framework: it underpins the use of GCDF values and gradients to enforce collision avoidance and perform trajectory optimization directly in configuration space, thereby avoiding the intricate nonlinear mapping and kinematic coupling between workspace collision constraints and configuration space variables.

Concretely, we quantitatively evaluate this property by comparing the residual reduction achieved by our GCDF-based analytical projection against iterative algorithm using workspace SDF directly. Specifically, we randomly sample 128 target environment points within three $x$–$y$ ranges ($\pm 4\,\textup{m}$, $\pm 5\,\textup{m}$, and $\pm 6\,\textup{m}$) around the mobile base to cover varying proximity and difficulty. For each target point $\bm p$, we initialize the robot from 128 different arm configurations $\bm q$ and apply the projection in Theorem~\ref{thm:gcdf_properties} to obtain a projected configuration $\bm q^{+}$. We quantify convergence by measuring the residual ratio between the initial and projected states, computed using SDF values evaluated at the corresponding configurations. Since the projection is closed-form, its runtime is negligible. As a baseline, we implement standard gradient descent with Armijo line search to solve the unconstrained problem $\min_{\bm q} f_s^2(\bm q,\bm p)$, and record the residual ratio after each iteration. The results are shown in Fig.~\ref{fig:projection_error}. Our GCDF projection brings the robot to (or extremely close to) the target point’s zero-level set in a single step across all ranges, whereas the SDF-based method often requires multiple iterations. This gap is as expected: \textit{the SDF gradient must be back-propagated through nonlinear forward kinematics, so its induced updates in configuration space are strongly coupled and effective only locally, leading to slower contraction toward the contact manifold.} Some results of our single-step projection are visualized 
in Fig.~\ref{fig:projection}.

Additionally, we design a goalkeeper scenario as an intuitive demonstration of the practical utility of GCDF projection in real-time closed-loop control. In this example, a ball is launched toward the goal mouth from different initial states, and the mobile manipulator reacts online by querying GCDF values and gradients and using the resulting projection as a time-varying control target. The target is recomputed at every timestep, yielding smooth and continuous motions that drive the robot toward interception. As visualized in Fig.~\ref{fig:goal_keeper}, the robot approaches the incoming ball and makes contact to block the shot, preventing a goal. 
\subsubsection{Benchmark Comparisons}
In this part, we conduct a comprehensive benchmark study to demonstrate the effectiveness and superiority of the proposed trajectory optimization algorithm in complex environments.
\begin{figure}[!t]
\setlength{\belowcaptionskip}{-10pt} 
\centering
\includegraphics[width=0.5\textwidth, trim={0cm 0cm 0cm 0cm}, clip]{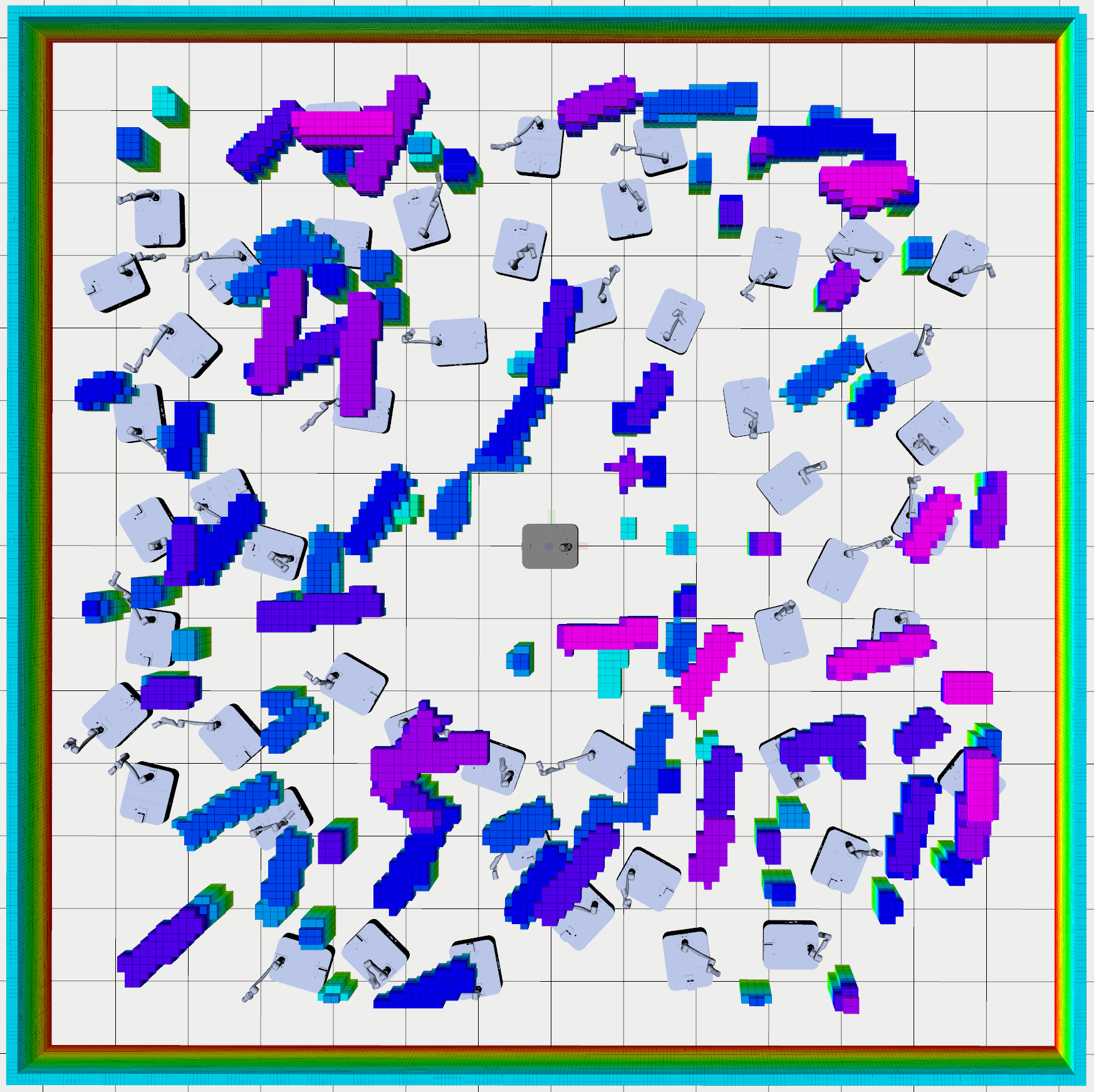}
\caption{Environment setting for benchmark evaluation. We generate 50 random feasible start–goal configurations in each map. The robot arm is a 6-DoF Kinova Gen3.}
\label{fig:benchmark_env_setting}
\end{figure}

\textbf{Environment Settings. }
As illustrated in Fig.~\ref{fig:benchmark_env_setting}, we construct a $14\,\textup{m}\times14\,\textup{m}$ map and randomly populate it with a prescribed number of rectangular obstacles placed at varying heights. We remark that our setup is substantially more challenging than commonly used discrete pillar-forest environments: obstacles have diverse sizes and configurations, are allowed to overlap, and their vertical stacking naturally induces highly non-convex geometry across different height levels, which poses significant difficulty for coordinated base–arm motion. We consider three difficulty tiers by varying the obstacle count to $80$, $100$, and $120$, respectively. The robot starts at the map center with the arm initialized to the zero configuration (upright). We randomly sample 50 collision-free and kinematically feasible goal whole-body poses whose base locations are at least $3\,\textup{m}$ away from the center. For each difficulty tier, we generate 5 random maps, resulting in $50\times5=250$ planning trials per tier.

\begin{figure}[t]
\setlength{\belowcaptionskip}{-15pt}
\centering
\includegraphics[width=\textwidth, trim={0cm 0.5cm 6cm 0cm}, clip]{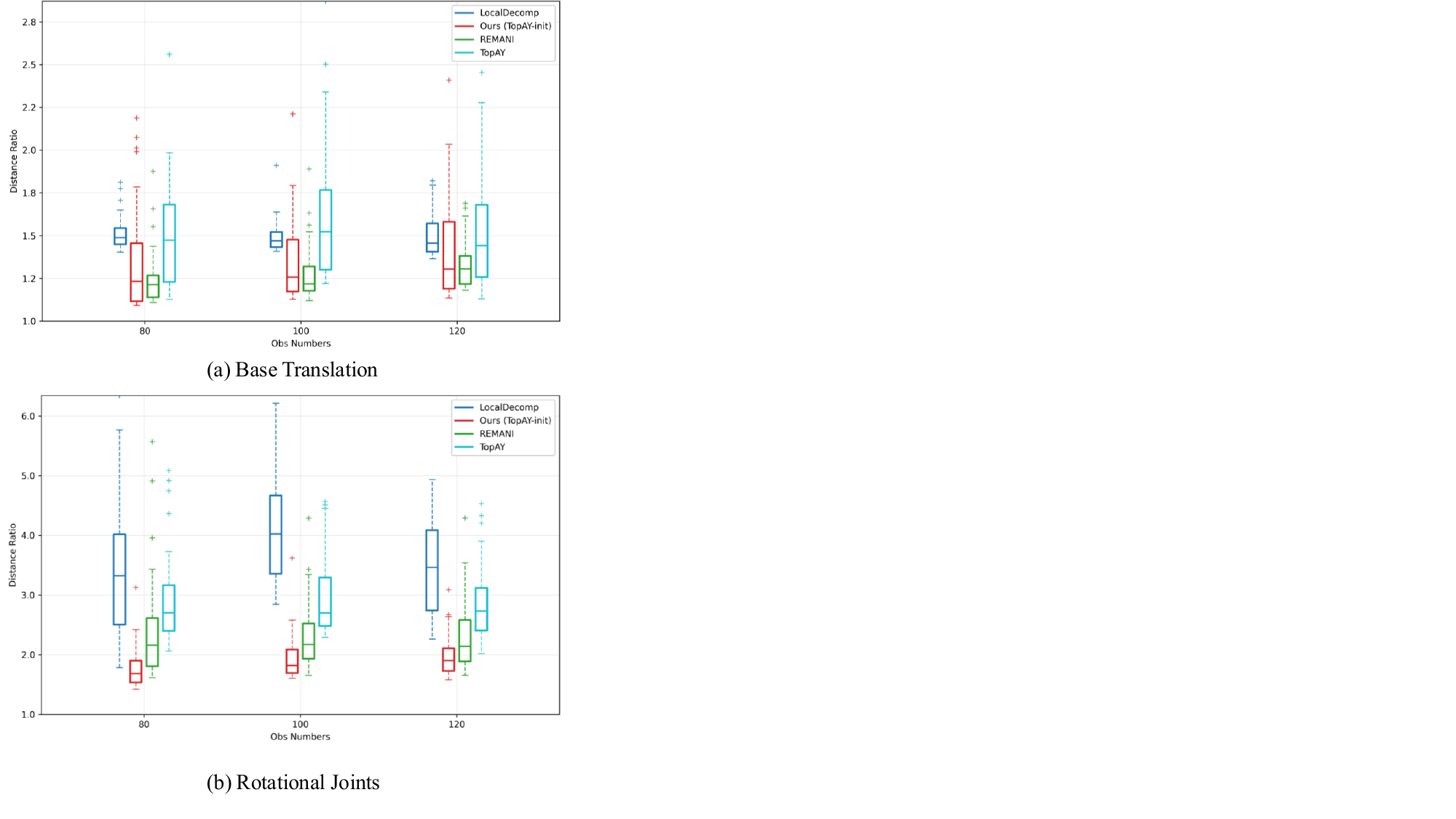}
\caption{Boxplots of configuration-space path-length ratios in randomized clutter benchmarks. Ratios are computed over successful trials as each baseline's trajectory length normalized by ours (lower is better) under three obstacle densities (80/100/120): (a) base translation component and (b) rotational joint component.}
\label{fig:c_space_ratio}
\end{figure}

\begin{figure*}[h!tbp]
\setlength{\belowcaptionskip}{20pt}
\centering
\includegraphics[width=\textwidth, trim={0cm 0cm 0cm 0cm}, clip]{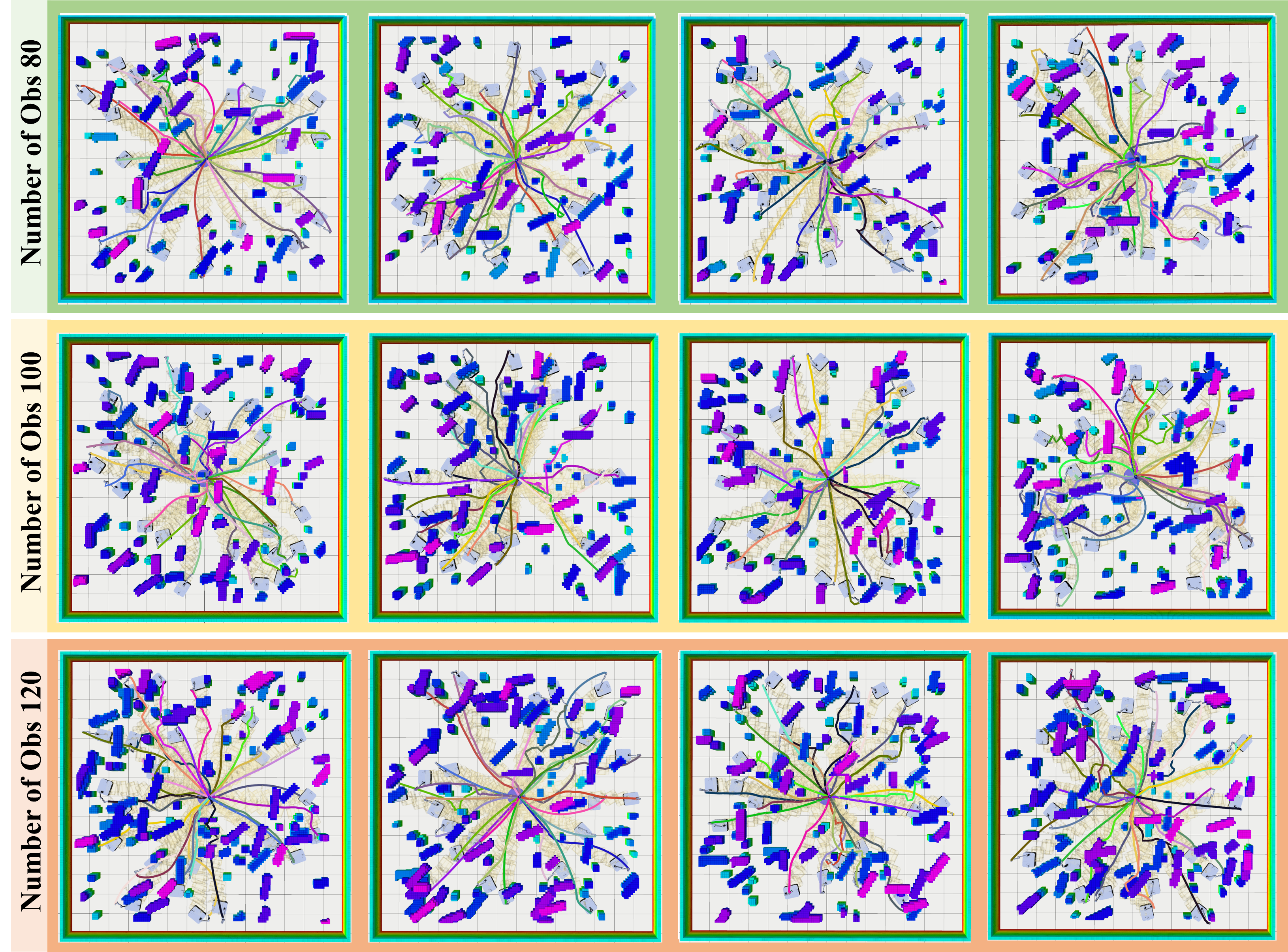}
\caption{Visualization of representative trajectories generated by \textbf{Ours} in randomized maps. For each map, we overlay 24 planned motions (different goals) and plot the corresponding end-effector traces. Rows correspond to obstacle densities (80/100/120). The end-effector paths reveal agile whole-body behaviors enabled by configuration-space optimization, including coordinated base--arm reconfiguration and non-conservative twisting maneuvers that pass through narrow, highly non-convex regions. Additional visualizations from multiple viewpoints are available on the project website.}
\label{fig:benchmark_simulation_all}
\end{figure*}
\begin{table*}[h!tbp]
\centering
\begin{threeparttable}
\small                       
\setlength{\tabcolsep}{5.6pt}
\renewcommand{\arraystretch}{1.28}

\sisetup{
  detect-weight=true,
  detect-family=true,
  table-number-alignment=center
}

\caption{\textbf{Benchmark results under different obstacle densities.}
We report success rate (SR), average runtime, and the average traversed trajectory length ratio
w.r.t.\ our method (Ours $=\textbf{1.00}$), separated into translational and rotational components.}
\label{tab:benchmark_results}

\begin{tabular*}{\textwidth}{@{\extracolsep{\fill}} c
  c S[table-format=1.4] S[table-format=1.2] S[table-format=1.2]
  c S[table-format=1.4] S[table-format=1.2] S[table-format=1.2]
  c S[table-format=1.4] S[table-format=1.2] S[table-format=1.2] @{}}

\specialrule{1.0pt}{0pt}{0.4pt}
\multirow[c]{3}{*}{\textbf{Methods}}
& \multicolumn{4}{c}{\textbf{Number of Obstacles: 80}}
& \multicolumn{4}{c}{\textbf{Number of Obstacles: 100}}
& \multicolumn{4}{c}{\textbf{Number of Obstacles: 120}} \\
\cmidrule(lr){2-5}\cmidrule(lr){6-9}\cmidrule(lr){10-13}

& \multicolumn{1}{c}{\makecell{\textbf{SR}\\(\%)}}
& \multicolumn{1}{c}{\makecell{\textbf{Time}\\(s)}}
& \multicolumn{2}{c}{\makecell{\textbf{Ratio} ($\downarrow$)\\\footnotesize Trans\hspace{10pt}Rot}}
& \multicolumn{1}{c}{\makecell{\textbf{SR}\\(\%)}}
& \multicolumn{1}{c}{\makecell{\textbf{Time}\\(s)}}
& \multicolumn{2}{c}{\makecell{\textbf{Ratio} ($\downarrow$)\\\footnotesize Trans\hspace{10pt}Rot}}
& \multicolumn{1}{c}{\makecell{\textbf{SR}\\(\%)}}
& \multicolumn{1}{c}{\makecell{\textbf{Time}\\(s)}}
& \multicolumn{2}{c}{\makecell{\textbf{Ratio} ($\downarrow$)\\\footnotesize Trans\hspace{10pt}Rot}} \\
\midrule


\scenario{LocalDecomp}
& 86.40 & {\textbf{---}} & 1.49 & 3.34
& 59.60 & {\textbf{---}} & 1.47 & 4.10
& 56.40& {\textbf{---}} & 1.45 & 3.45 \\

\scenario{REMANI}
& 86.80 & 0.2858 & \textcolor{orange}{1.23} & \textcolor{orange}{2.15}
& 61.50 & 0.3077 & \textcolor{orange}{1.24} & \textcolor{orange}{2.21}
& 49.60 & 0.3292 & 1.33 & \textcolor{orange}{2.22} \\

\scenario{TopAY}
& 88.40 & \textcolor{orange}{0.1393} & 1.49 & 2.71
& 67.20 & \textcolor{orange}{0.1776} & 1.51 & 2.72
& 51.60 & \textcolor{orange}{0.2727} & 1.47 & 2.78 \\

\textbf{Ours}
& \textcolor{orange}{92.40} & 0.1732 & \textbf{1.00} &  \textbf{1.00}
&  \textcolor{orange}{88.40}& 0.2446 & \textbf{1.00} & \textbf{1.00}
& \textcolor{orange}{86.80} & 0.2871 & \textbf{1.00} & \textbf{1.00} \\

\textbf{Ours (TopAY-init)}
& \best{98.00} & \best{0.1341} & \textcolor{orange}{1.25} & \best{1.81}
& \best{92.50} & \best{0.1641} & \best{1.28} & \best{1.92}
& \best{91.60} & \best{0.1957} & \best{1.32} & \best{1.98} \\

\specialrule{1.0pt}{0.4pt}{0pt}
\end{tabular*}
\begin{tablenotes}[flushleft]
\footnotesize
\item[*] \textbf{---} denotes that the runtime is \emph{not directly comparable} because the method is a \emph{single-step local controller}. 
 \item[*]\best{Dark Green} highlights the best results while \textcolor{orange}{Orange} indicates the second-best. We report the runtime for \emph{all} trials, while the Ratio is computed only over \emph{successful} trajectories.
\item[*] \textbf{Ours (TopAY-init)} initializes our solver with the discrete front-end paths produced by TopAY; if the front-end search fails, we fall back to the original baseline initialization, i.e., linear interpolation for the mobile base and an all-zero arm configuration.
\end{tablenotes}
\end{threeparttable}
\end{table*}
\begin{figure*}[!t]
\setlength{\belowcaptionskip}{-10pt}
\centering
\includegraphics[width=\textwidth, trim={0cm 0cm 0cm 0cm}, clip]{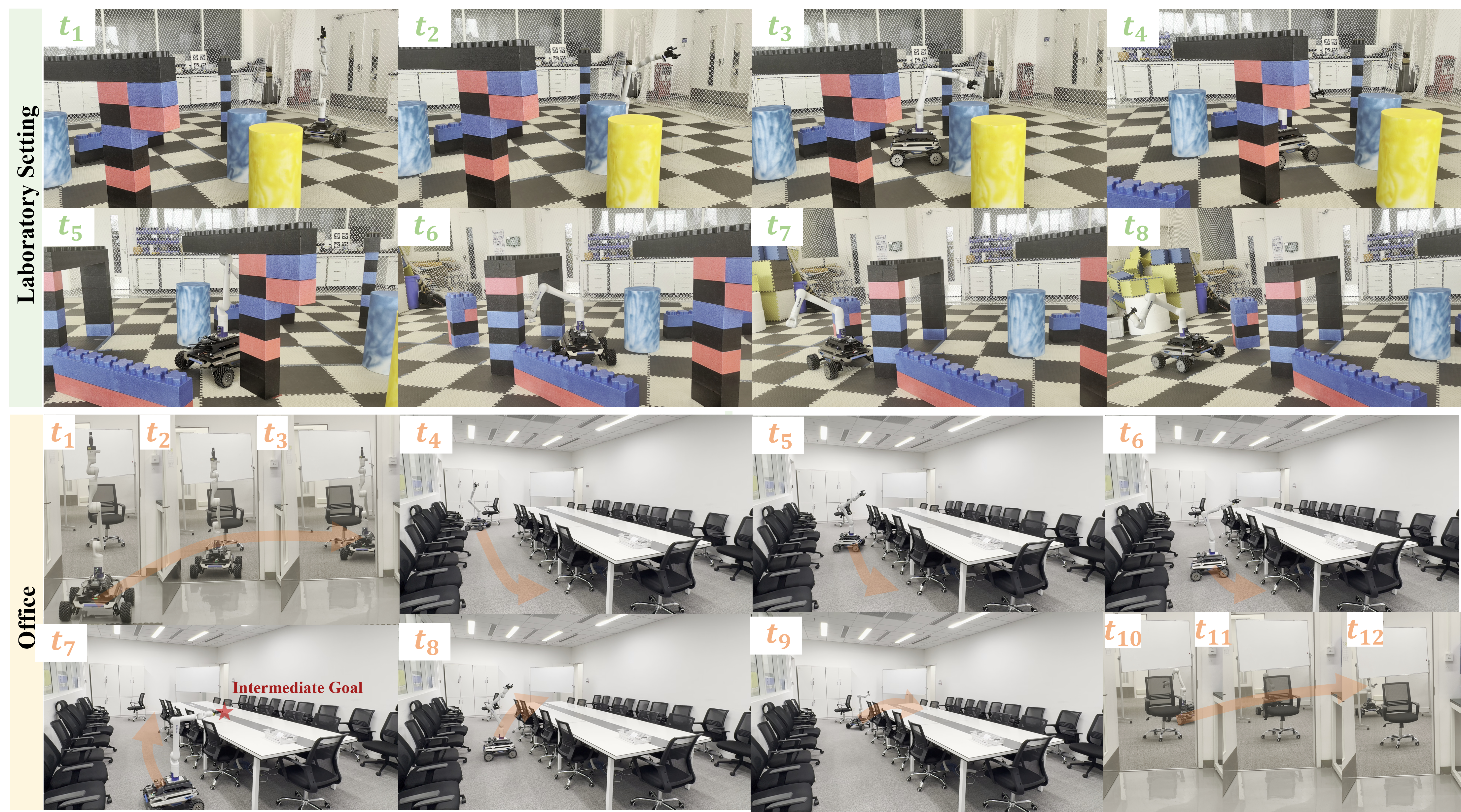}
\caption{Real-world deployment of our planner. \textbf{Top:} in a laboratory obstacle course composed of stacked and overlapping blocks with limited clearance, the robot performs agile base--arm reconfiguration while traversing tight passages ($t_1$--$t_8$). \textbf{Bottom:} in an office environment, the robot follows a multi-goal route (with an intermediate waypoint) through narrow corridors and furniture-dense areas ($t_1$--$t_{12}$). Orange arrows show the direction of motion.}
\label{fig:real_exp}
\end{figure*}

\textbf{Baselines. }
We remark that, under our benchmark setting, most existing methods fail to produce feasible solutions as the environments are deliberately dense and geometrically complex.
We compare our method against four representative baselines. (1) \scenario{RRT}~\cite{sucan2012the-open-motion-planning-library}: a direct sampling-based planner that searches in the full whole-body configuration space and performs collision checking using the environment ESDF. (2) \scenario{LocalDecomp}~\cite{chunxin2025RAL-localreactive}: a SOTA free-space decomposition method that acts as a local reactive controller. At each step, it constructs link-wise safe regions from obstacle information and plans the next motion within these regions to maintain safety. For fairness, we assume the full map is known in advance and thus exclude perception-induced errors. (3) \scenario{REMANI}~\cite{Wu2024ICRA} and (4) \scenario{TopAY}~\cite{xu2025topayefficienttrajectoryplanning}: SOTA two-stage pipelines that decouple base and arm planning in the front end, followed by trajectory optimization in the back end. \scenario{REMANI} plans the base using Hybrid A* and then searches for a discrete arm trajectory along the base path via constrained RRT*-Connect. \scenario{TopAY} first performs parallel topological search on the mobile base to generate multiple independent candidate base references, then refines each candidate by searching nearby arm motions using constrained Bi-RRT*, selecting the best trajectory among candidates with parallel acceleration. Both methods employ penalty-based back-end optimization that augments constraints into the objective as soft penalties, requiring tuning of penalty weights for reliable convergence. For collision avoidance, they rely on the environment ESDF and approximate the robot using a set of collision spheres, minimizing the sum of ESDF values over spheres. This formulation is common in aerial robotics, where the robot can often be approximated as a point (or a single sphere), and it has the practical advantage that the collision-evaluation cost does not grow with obstacle density. Tuning the penalty weights across multiple terms can be challenging for these methods. We therefore adopt a unified strategy that prioritizes collision costs and reports the best performance achieved after careful parameter tuning. The collision spheres are also carefully fitted to closely approximate the robot’s link geometries to avoid conservative behaviors.

Finally, since our goal is a robust and efficient back-end trajectory optimization algorithm that can produce meaningful whole-body trajectories from trivial initial guesses within a limited number of iterations, we evaluate two variants. \textbf{Ours} uses a naive initialization: linear interpolation for the base between start and goal, with the arm fixed at the all-zero (upright) posture. \textbf{Ours (TopAY-init)} augments our GCDF-based solver with the first feasible front-end solution returned by \scenario{TopAY} as initialization, isolating and highlighting the strength of our numerical back end under strong initial guesses. 

\textbf{Main Results. }
We evaluate each method by its success rate (with failure cases including collision, optimization failure, and stagnation) and computation time. To quantify trajectory conservativeness and smoothness, we report the configuration-space path-length ratio over successful trials, defined as each baseline’s trajectory length normalized by that of our method. Representative trajectories produced by our approach are visualized in Fig.~\ref{fig:benchmark_simulation_all}. Quantitative results are summarized in Table~\ref{tab:benchmark_results}, and the distribution of trajectory length ratios is further shown in Fig.~\ref{fig:c_space_ratio}.

\scenario{RRT} consistently failed to find a feasible solution within the allocated time, likely due to the high density of our experimental setup. We therefore omit \scenario{RRT} from Table~\ref{tab:benchmark_results}. The remaining results show that our method efficiently handles large-scale collision constraints across all difficulty levels and consistently achieves the highest success rates. In particular, \textbf{Ours (TopAY-init)} achieves the best robustness with the shortest runtime, underscoring a key practical insight: once the per-iteration cost (constraint evaluation, batching, and QP subproblem construction and solution) is optimized, the end-to-end runtime of an iterative solver is largely determined by the number of iterations required to reach a high-quality feasible solution. This iteration count is influenced not only by initialization quality, but also by the choice of collision constraints: well-posed constraints with informative local first-order geometry can dramatically accelerate convergence. In our case, GCDF enforces collision avoidance directly in configuration space and provides accurate local gradients, yielding more reliable descent directions and faster contraction toward feasibility. A stronger initialization, therefore, further reduces the iteration count and amplifies the benefit of a robust back-end optimizer, while the solver’s refinement capability remains essential for improving trajectory quality.

For two-stage methods such as \scenario{REMANI} and \scenario{TopAY}, the hierarchical front-end search is typically fast when it succeeds; however, as obstacle density increases and the environment becomes more geometrically complex, front-end failures become frequent. In such cases, reverting to a naive initialization often causes the ESDF-sphere, penalty-based back end to stuck or converge to infeasible, collision-prone trajectories. {\scenario{TopAY}'s} parallel candidate strategy partially mitigates this issue by increasing the chance of finding a feasible front-end solution, but it also tends to select more conservative base routes with longer detours in dense clutter.

\scenario{LocalDecomp} exhibits a different failure mode due to its local, reactive nature. Because it reasons only about obstacles in a limited neighborhood, its performance is less sensitive to obstacle count than to global layout complexity; it is particularly prone to getting stuck or colliding in ``bridge-like'' structures and stacked non-convex geometries. Moreover, since it prioritizes local traversability rather than global path quality, it often induces large and unnecessary base and arm rotations.

In contrast, \textbf{Ours} and \textbf{Ours (TopAY-init)} optimize directly in configuration space with implicit GCDF constraints, consistently producing non-conservative whole-body motions in dense 3D clutter. In particular, the naive-initialization variant keeps the base close to a straight-line route and relies on coordinated arm reconfiguration to negotiate tight clearances, resulting in the shortest traversal distance with smooth configuration-space evolution. Fig.~\ref{fig:benchmark_simulation_all} further illustrates these safe and agile behaviors, showing coordinated base--arm maneuvers that maintain clearance without excessive detours. Overall, these results validate the effectiveness of our learned implicit GCDF and training pipeline, together with our customized high-performance numerical algorithm, and highlight the benefit of enforcing collision avoidance directly in configuration space for robust and efficient whole-body trajectory optimization.
\subsection{Real-World Validation}

We further validate the proposed framework on a real mobile manipulator in two representative settings: (\emph{i}) a laboratory obstacle course with manually arranged clutter consisting of stacked blocks of diverse shapes and heights, and (\emph{ii}) an office environment that requires navigation through narrow passages while sequentially reaching multiple intermediate goals. Fig.~\ref{fig:real_exp} visualizes execution with snapshots along the planned trajectories. These deployments are intentionally challenging for whole-body mobile manipulation: the scenes exhibit tight 3D clearances across multiple height layers and non-convex, overlapping geometry (e.g., stacked blocks and bridge-like structures). In the laboratory course (top, $t_1$--$t_8$), the robot navigates through densely placed obstacles with limited clearance. It exhibits non-conservative whole-body reconfiguration: the base commits to narrow passages. At the same time, the arm continuously adjusts its posture to maintain clearance against obstacles at different heights, thereby avoiding excessive detours. In the office scenario (bottom, $t_1$--$t_{12}$), the robot plans and executes a multi-goal route through confined corridors and furniture-dense areas (including an intermediate waypoint), producing smooth collision-free motions without oscillatory corrections or large unnecessary rotations. Across both settings, the executed trajectories maintain consistent clearance in tight spaces, indicating that the learned neural GCDF provides sufficiently accurate local geometry and first-order information to support reliable configuration-space optimization on hardware. Overall, these real-world trials validate the practicality of our approach and its ability to generate safe, agile, and dynamic feasible whole-body motions.

\section{Conclusion}
\label{sec:conclusion}
We presented a configuration-space collision reasoning and trajectory optimization framework for whole-body mobile manipulation in dense, cluttered, and unbounded environments. We introduced GCDF as a generalization of CDF that can go beyond purely rotational manipulators and consider mobile manipulators with both translational and rotational joints. We developed a one-time offline data collection and training pipeline to learn an implicit neural GCDF with accurate values and first-order information. Building on this representation, we proposed a high-performance C++ sequential convex optimization algorithm that natively supports neural implicit constraints through batched GPU queries, sparsity-aware active-constraint selection, and online constraint injection. Extensive randomized clutter benchmarks and real-world experiments demonstrate robust, efficient, and non-conservative whole-body motion generation, highlighting the benefit of enforcing collision avoidance directly in configuration space.

\textbf{Future directions.} First, in the training stage, we did not exhaustively explore how network architecture choices and hyperparameters (e.g., loss-term weights) affect the fidelity of the learned implicit GCDF. In this work, we used a standard MLP; however, more expressive architectures may better fit larger and higher-resolution datasets, yielding more accurate values and gradients and further improving downstream optimization performance. Moreover, the relative weighting between translational and rotational components in the GCDF metric can significantly influence optimization behavior. While we empirically selected a strong setting, an interesting direction is to treat these weights as conditioning inputs to the network, allowing a single model to realize different trade-offs without retraining.

Second, despite substantial efficiency optimizations, the overall computation can still be dominated by the scale of collision constraints, since obstacle geometries are represented by many sampled points. This creates a trade-off between geometric fidelity and constraint complexity: faithfully representing curved or complex obstacles may require dense sampling, whereas sparse sampling can lead to undesired solutions (e.g., the arm threading through gaps between samples). A natural extension is to move beyond point-based constraints by learning GCDF over parametric primitives. For example, by representing obstacles as spheres (or other primitives) with radius $r$ and defining a GCDF that takes $(\bm q,\bm p,r)$ as inputs. Such primitive-aware constraints could significantly reduce the number of constraints, improve geometric coverage, and mitigate failure modes caused by undersampling, while retaining the benefits of configuration-space collision reasoning.

\section*{Acknowledgments}
We thank Long Xu, Zailin Huang, and Chengkai Wu for insightful discussions on experimental settings for mobile manipulators.
\bibliographystyle{IEEEtran}
\bibliography{ref}
\appendices
\section{Proof of \prettyref{thm:gcdf_properties}}
\label{app:gcdf_properties}

\begin{proof}
~\\
\begin{enumerate}
    \item Starting from Definition~\ref{def:cdf_generalized}, let $\bm q_z$ be the closest configuration of $\bm q$ on $\calZ(\bm p)$, we have
    $$
        f^g_c(\bm{p}, \bm{q}) = \sqrt{(\bm q - \bm q_z)^{\top}\bm{M}(\bm q - \bm q_z)},
    $$
    and its partial derivative with respect to $\bm q$:
    $$
        \nabla_{\bm q} f^g_c(\bm{p}, \bm{q}) = \frac{\bm{M}(\bm q - \bm q_z)}{\sqrt{(\bm q - \bm q_z)^{\top}\bm{M}(\bm q - \bm q_z)}} = \frac{\bm{M}(\bm q - \bm q_z)}{f^g_c(\bm{p}, \bm{q})}.
    $$
    Then, with the definition of the weighted norm, we have
    \begin{align*}
        \|\nabla_{\bm{q}} f^g_c(\bm{p}, \bm{q})\|_{\bm{M}^{-1}} &= \sqrt{\nabla_{\bm{q}} f^g_c(\bm{p}, \bm{q})^{\top} \bm{M}^{-1} \nabla_{\bm{q}} f^g_c(\bm{p}, \bm{q})} \\
        &= \sqrt{\frac{(\bm q - \bm q_z)^{\top} \bm{M}^{\top} \bm{M}^{-1} \bm{M} (\bm q - \bm q_z)}{(\bm q - \bm q_z)^{\top} \bm{M} (\bm q - \bm q_z)}} \\
        &= 1.
    \end{align*}

    \item We first illustrate why we can not move along the gradient direction to get to the $\calZ$ in one step. 
    To reach $\calZ$ in one step, we need to find a direction and step size $\lambda$ such that
    $$\bm{q} - \bm{q}_z = \lambda\nabla f^g_c(\bm{p},\bm{q}).$$

    Substituting the gradient expressions, it gives
    $$\bm{q} - \bm{q}_z = \lambda\frac{\bm{M}(\bm{q}-\bm{q}_z)}{\sqrt{(\bm{q}-\bm{q}_z)^T\bm{M}(\bm{q}-\bm{q}_z)}}.$$

    Let $\bm{v} = \bm{q} - \bm{q}_z$. Then, it gives
    $$\bm{v} = \lambda\frac{\bm{M}\bm{v}}{\sqrt{\bm{v}^T\bm{M}\bm{v}}}.$$

    Left multiplying both sides by $\bm{v}^T$, it gives
    $$\bm{v}^T\bm{v} = \lambda\frac{\bm{v}^T\bm{M}\bm{v}}{\sqrt{\bm{v}^T\bm{M}\bm{v}}}.$$

    Therefore, it gives
    $$\lambda = \frac{\bm{v}^T\bm{v}}{\sqrt{\bm{v}^T\bm{M}\bm{v}}} = \frac{\|\bm{q}-\bm{q}_z\|^2}{\sqrt{(\bm{q}-\bm{q}_z)^T\bm{M}(\bm{q}-\bm{q}_z)}}.$$

    Note that, unless $\bm{M}$ is an identity matrix, we cannot obtain $\lambda$ since $\bm{q}_z$ is unknown. However, inspired by the concept of conjugate gradient descent~\cite{Shewchuk1994CGMethod}, we can try moving along the direction of $\bm{M}^{-1}\nabla f^g_c(\bm{p},\bm{q})$ instead:
    $$\bm{q} - \bm{q}_z = \lambda \bm{M}^{-1}\nabla f^g_c(\bm{p},\bm{q}).$$

    Substituting the gradient gives
    $$\bm{q} - \bm{q}_z = \lambda \bm{M}^{-1}\frac{\bm{M}(\bm{q}-\bm{q}_z)}{\sqrt{(\bm{q}-\bm{q}_z)^T\bm{M}(\bm{q}-\bm{q}_z)}}.$$

    Let $\bm{v} = \bm{q} - \bm{q}_z$. Then, we have
    $$\bm{v} = \lambda\frac{\bm{v}}{\sqrt{\bm{v}^T\bm{M}\bm{v}}}.$$

    Since the vectors on both sides are now aligned, we can directly solve for $\lambda$:
    $$\lambda = \sqrt{\bm{v}^T\bm{M}\bm{v}} = \sqrt{(\bm{q}-\bm{q}_z)^T\bm{M}(\bm{q}-\bm{q}_z)} = f^g_c(\bm{p},\bm{q}).$$
\end{enumerate}
\end{proof}

\end{document}